      \theoremstyle{plain}
      \newtheorem*{assumption1}{Assumption 1}
      \newtheorem{proposition}{Proposition}
      \newtheorem{theorem}{Theorem}
\algrenewcommand{\algorithmicrequire}{\textbf{Input:}}
\algrenewcommand{\algorithmicensure}{\textbf{Output:}}
\algnewcommand{\And}{\State\textbf{and}}
\begin{document}
\begin{CJK}{UTF8}{gbsn}

\title{Stability Constrained Mobile Manipulation Planning on Rough Terrain}
\author{Jiazhi Song(宋佳智)$^{1}$ and Inna Sharf$^{2}$
\thanks{Both authors are with department of Mechanical Engineering, McGill University, Montreal, QC H3A 0C3, Canada $^{1}${\tt\small jiazhi.song@mail.mcgill.ca} $^{2}${\tt\small inna.sharf@mcgill.ca}
}%
}

\maketitle

\begin{abstract}
This paper presents a framework that allows online dynamic-stability-constrained optimal trajectory planning of a mobile manipulator robot working on rough terrain. First, the kinematics model of a mobile manipulator robot, and the Zero Moment Point (ZMP) stability measure are presented as theoretical background. Then, a sampling-based quasi-static planning algorithm modified for stability guarantee and traction optimization in continuous dynamic motion is presented along with a mathematical proof. The robot's quasi-static path is then used as an initial guess to warm-start a nonlinear optimal control solver which may otherwise have difficulties finding a solution to the stability-constrained formulation efficiently. The performance and computational efficiency of the framework are demonstrated through an application to a simulated timber harvesting mobile manipulator machine working on varying terrain. The results demonstrate feasibility of online trajectory planning on varying terrain while satisfying the dynamic stability constraint.
\end{abstract}

\section{Introduction}
\IEEEPARstart{M}{obile} manipulation is a popular topic in robotics research due to the omnipresence of this task in robotic applications. From indoor assistance \cite{huang2000} to Mars exploration \cite{howard2007}, mobile manipulators find application in various industries. The versatility of mobile manipulator robots is due to their simple yet effective construction: a certain number of manipulator arms mounted on top of a mobile platform that is either wheeled or tracked. In order to expand the application of mobile manipulators to uncontrolled outdoor environments, research efforts have also been spent on increasing the resiliency of mobile manipulators to their operating environment. Among different types of disturbances that can arise in outdoor environments, terrain variation \cite{papa1996} is among the most common, and it can have a large negative impact on the performance of a mobile robot if rollover or sliding occurs. 

Heavy equipment commonly used in mining, logging, and construction such as excavators, feller bunchers, and loaders can also be treated as mobile manipulators. Due to the slow development of automation in these industries, the needs related to terrain variation have rarely been addressed by the mobile manipulation research community. The aforementioned machines are prone to roll over as they have a high center of mass, manipulate heavy objects, and inevitably have to operate on slopes and in adverse weather conditions. The prevention of rollover and sliding is also of great importance to the machines employed in these industries, as this type of failure poses numerous risks to the operator, machine itself, and the environment. This paper aims to introduce an approach to allow robots to work in terrain-varying environments while maintaining or increasing their efficiency to also benefit the safety, productivity, and energy efficiency of future autonomous industrial applications. 

In order to ensure the upright stability of robots during mobile manipulation, passive methods such as novel mechanical design \cite{lindroos2017,chen2018} and steep slope avoidance \cite{pai1998,ge2002} can be utilized. However, complicated mechanical designs may not be practical in industrial applications, and slope avoidance misses out on the reconfiguration capability of mobile manipulators and may lead to a drastic reduction of their operational range. Therefore, in this article, we take a more proactive approach to plan the motion and configuration of a mobile manipulator robot to best allow stability constraint satisfaction so that a desired task is more likely to be completed. Before the proposed approach is introduced, past literature on stability measures, trajectory planning, stability constrained trajectory planning, and autonomous industrial machines is reviewed.

\subsection{Stability Measures}
Several measures designed to quantify the proximity to a rollover condition through force and moment measurement include the static force and moment analysis introduced in \cite{gibson1971}, the force-angle stability measure in \cite{papa1996,diaz2005,mosa2011}, and the lateral load transfer (LLT) parameter employed in \cite{bouton2007,bouton2010,denis2016}. However, these measures are based on the forces/moments experienced at joints or wheels and cannot provide direct guidelines for trajectory planning. Considering the complex structure of humanoid robots and the effect of momentum caused by different motions, the zero moment point (ZMP) stability measure was introduced in \cite{vuko1972} to achieve quantitative stability inference based on joint motions and robot inertia parameters. 

Since the mobile manipulation planning has to account for the robot's and the manipulated object's inertia parameters and the robot's joint motions, the ZMP stability measure naturally becomes the criterion of choice in this paper. The ZMP measure may be considered as too conservative in a walking robot scenario as it focuses on the overall moment along a supporting edge. In other words, ZMP instability corresponds to the beginning of rolling but does not necessarily mean the robot will rollover. However, the criterion is appropriate for a mobile manipulator robot with tracked/wheeled base since the base is intended to maintain full contact with the ground during ordinary operation. 

\subsection{Mobile Manipulation Planning}
There is a variety of results on manipulator and mobile manipulator optimal trajectory planning \cite{shin1985, bobrow1985,seraji1998, versch2016} but none of the previous works address the problem of rollover and sliding prevention. 

With an emphasis on arm-base coordination, theoretical results for coordinated end-effector trajectory tracking with holonomic and nonholonomic base using a continuous-time kinematic model of the robot were provided in \cite{seraji1998}. Building on the results in \cite{seraji1998}, a 5-DoF arm mounted on a nonholonomic mobile base to draw lines on walls is modeled as a redundant system in \cite{padois2006}. Redundancy of the system was utilized to maximize manipulability, reduce impact force from contact with wall, and avoid obstacles. A coordinated motion generation method for a mobile manipulator grasping an object with uncertainty is presented in \cite{chen2015}. The mobile base is controlled to maximize arm reachability by positioning the arm so that the object is located at a ``sweet" spot within the arm's workspace. This method only considers the kinematic model of a robot and would require another layer of control in order to track the prescribed trajectory.

Using optimal control formulation, in \cite{avanzini2015}, a mobile manipulator with a 3-DoF holonomic base and a 5-DoF arm was able to achieve trajectory tracking under various constraints while avoiding obstacles, by employing model predictive control. In order to obtain linear constraints for the online optimization, object avoidance constraints were approximated by assuming the robot is moving at maximum velocity. However, this technique would drastically reduce the mobile capability of a robot in a cluttered environment, such as a forest.

\subsection{ZMP Measure and Mobile Robots}
An early implementation of the ZMP measure to a mobile manipulator is presented in \cite{sugano1993}, and further work on mobile robots with stability constraint evolved from it. Guided by the ZMP formulation, a mobile manipulator's base is utilized to generate stability-compensating motions, while the manipulator arm is executing tasks \cite{huang2000,dine2018}. By using potential functions derived from the ZMP formulation, stability-compensating motion can also be generated for manipulator arms, as demonstrated in \cite{kim2002, choi2012, lee2012}. However, requiring a robot to continuously generate motion that compensates for stability during operations will unnecessarily reduce a robot's efficiency for task completion. 

The ZMP measure can also be employed as a constraint in the path following optimal control algorithms as demonstrated in \cite{mohammadi2016}. But for paths generated without paying attention to stability, applying the ZMP constraint at the path following stage can cause infeasibility when the optimal control formulation tries to find a solution. 


\subsection{Robotics in Timber Harvesting}
Timber harvesting, which is the targeted industry of application of our work, is a very important industry for many countries including Canada. The majority of modern timber harvesting businesses worldwide employ feller-bunchers and timber harvesters to fell trees, and use skidders and forwarders to transport felled trees for further processing. The aforementioned machines can be considered as a type of mobile manipulator since they consist of mobile bases and hydraulically powered mechanized arms with multiple degrees of freedom. Nowadays, the machines still fully rely on operator judgement and control in order to function \cite{lindroos2017}. The lack of machine automation in the timber harvesting industry results in high operator training costs and suboptimal efficiency in harvesting; there is a strong impetus for developing an autonomous harvesting system.

Some progress has been made to increase the timber harvesting machines' autonomy. The dynamics model of a timber harvester is presented in \cite{papa1997a}; the teleoperation of a forestry manipulator is showcased in \cite{wester2008}; the hydraulic actuator control of a forwarder machine has been discussed in \cite{hera2009} and \cite{morales2014}, and the motion control of a forestry manipulator along a fixed path is presented in \cite{morales2015}. However, a versatile trajectory planning algorithm that is tailored to the terrain related challenges of timber harvesting is yet to be developed. 

\subsection{Contributions}
Building on our earlier work in \cite{self2020} that mainly deals with stability constrained manipulation with simple mobile capability, the framework presented in this paper consists of the following major additions:
\begin{itemize}
\item{The stability-constrained path and trajectory planning framework for mobile manipulation on varying 3-D terrain is developed.}
\item{A modification to sampling-based path planning algorithms that guarantees the satisfaction of non-convex stability constraints in continuous time is developed.}
\item{A first theoretical analysis of the well-known ZMP stability measure with respect to motion on 3-D terrain with known topography is carried out.}
\end{itemize}

To the best of the authors' knowledge, the framework presented in this paper is the first to enable 3-D terrain mobile manipulation planning with a focus on task completion instead of stability compensation. In other words, a stability measure acts as a constraint that does not interfere with the robot's motion unless the constraint is violated. The main benefits of the proposed framework include higher task completion rate and higher motion efficiency compared to the previous work. This differs drastically from planning methods in \cite{dine2018, kim2002, choi2012, lee2012}, and \cite{huang2000} that mainly focus on generating motions that improve stability. 

\subsection{Organization}
In this paper, a feller buncher machine employed for timber harvesting is considered as an example of a mobile manipulator robot. In Section II, the kinematics model of the machine and the ZMP measure formulation are introduced as theoretical background for the paper. In Section III, the optimal control formulation of the trajectory planning problem is presented subject to the ZMP stability constraint. In light of the high dimensionality of the full model and the resulting computational complexity of solving the optimal trajectory planning problem, we propose a simplified model of the feller buncher; this model also allows us to obtain an analytical solution to the trajectory planning problem under certain assumptions. Simulation results showcasing the motion plan will be presented in Section IV. Discussion and potential future work will be mentioned in Section V.

\section{Theoretical Background}
This paper provides a solution to the problem of rough terrain optimal trajectory planning of mobile manipulators subject to dynamic stability constraint. Some theoretical background including the kinematics of a mobile manipulator robot, the mathematical definition of dynamic stability, and the general formulation of the problem are introduced in this section.

\subsection{Kinematics of Mobile Manipulator}
In order to accommodate the constraints imposed by the robot's joint properties and ZMP stability, we first derive the full kinematic model of a mobile manipulator whose example is shown schematically in Figure \ref{schematic1}. The manipulator arm of the robot has $n \in \mathbb{Z}^{0+}$ joint DoFs, and the mobile base is modeled as a unicycle on arbitrary terrain. The method presented in this paper is also applicable to car-like mobile robots without manipulators by treating them as mobile manipulators with $n=0$.

Frame $\mathcal{O}$ is the inertial frame, and frame $\mathcal{F}_i$ is fixed to the $i$-th link of the robot for $i\in\{0,1,\dots,n\}$. Here, link $0$ refers to the robot's mobile base, and links $1$ to $n-1$ refer to the manipulator arm components, and link $n$ refers to the robot's end effector. The three axes of each link frame $\boldsymbol{x}_i$, $\boldsymbol{y}_i$, and $\boldsymbol{z}_i$ are represented by red, green, and blue arrows, respectively.

\begin{figure}[h!]
\centering
\includegraphics[angle=0,origin=c,trim = 0mm 0mm 0mm 0mm, clip, width=8cm]{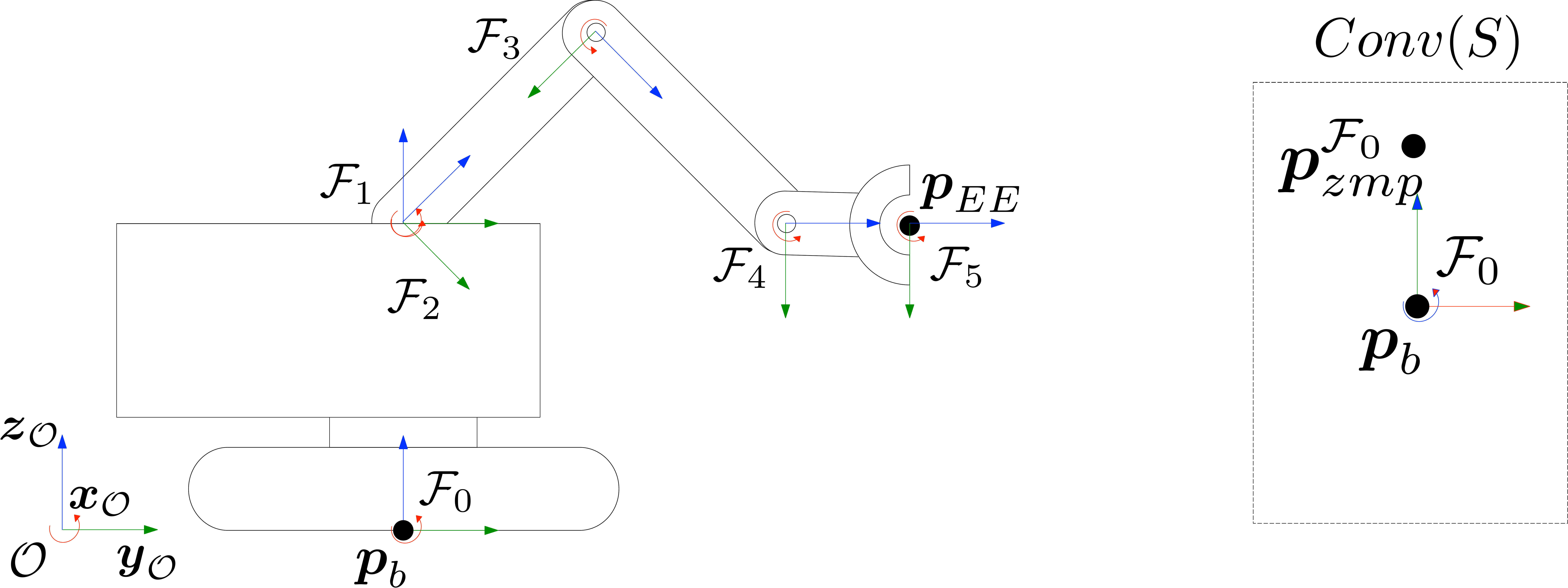}
\caption{Schematic diagram of a tracked mobile manipulator with 5 manipulator DoFs and its rectangular support polygon $Conv(S)$.}
\label{schematic1}
\end{figure}

We denote the angle of joint $i$ between link $i-1$ and $i$ with $q_i\in\mathbb{R}$ ($i = 1,2,\dots,n$), and collect all joint angles into a column vector $\boldsymbol{q}\in\mathbb{R}^n$, and consider the base-fixed frame $\mathcal{F}_0$ to undergo a $\boldsymbol{z}$-$\boldsymbol{x}$-$\boldsymbol{y}$ rotation from initial attitude with respective angles represented by $\psi$ (yaw), $\theta$ (pitch), and $\phi$ (roll). The Cartesian coordinates of each component's center of mass relative to $\boldsymbol{p}_b$ expressed in $\mathcal{F}_0$, $\boldsymbol{p}_i^{\mathcal{F}_0}$, can be found similarly and written succinctly by introducing $\bar{\boldsymbol{q}} = [\boldsymbol{p}_b^{\mathcal{O}^T}, \psi, \theta, \phi, \boldsymbol{q}^T]^T$ as:

\begin{equation}
\boldsymbol{p}_i^{\mathcal{F}_0} = \boldsymbol{f}_i(\bar{\boldsymbol{q}}).
\label{fi}
\end{equation}
From (\ref{fi}), we also obtain the inertial accelerations of links' centers of mass expressed in $\mathcal{F}_0$, which will be needed for the dynamics analysis later:
\begin{equation}
\begin{aligned}
\dot{\boldsymbol{p}}_i^{\mathcal{F}_0} &= \boldsymbol{J}_{pi}(\bar{\boldsymbol{q}})\dot{\bar{\boldsymbol{q}}},\\
\ddot{\boldsymbol{p}}_i^{\mathcal{F}_0} &= \dot{\boldsymbol{J}}_{pi}(\bar{\boldsymbol{q}})\dot{\bar{\boldsymbol{q}}}+\boldsymbol{J}_{pi}(\bar{\boldsymbol{q}})\ddot{\bar{\boldsymbol{q}}},
\end{aligned}
\label{ji}
\end{equation}
where we introduced the position kinematics Jacobian, $\boldsymbol{J}_{pi}(\bar{\boldsymbol{q}})=\frac{\partial{\boldsymbol{f}_i}}{\partial{\bar{\boldsymbol{q}}}}$. 


As noted earlier, we consider mobile base as a unicycle model and the robot is therefore subject to control input $\boldsymbol{u}=[u_a,u_\psi,\boldsymbol{u}_q^T]^T$, where $u_a$, $u_\psi$, and $\boldsymbol{u}_q$ represent acceleration along heading direction, yaw angular acceleration, and manipulator arm joint accelerations, respectively. Defining $\boldsymbol{x} = [\bar{\boldsymbol{q}}^T, \dot{\bar{\boldsymbol{q}}}^T]^T$, the mobile manipulator's kinematics equations can be written in state-space form as:
\begin{equation}
\dot{\boldsymbol{x}} = g(\boldsymbol{x},\boldsymbol{u}).
\label{kin_main}
\end{equation}
Eq. (\ref{kin_main}) serves as the model of the robot for the overall optimal trajectory planning problem. The terrain information is also considered to be embedded in (\ref{kin_main}).

In addition to the kinematics equation (\ref{kin_main}), a geometric construction called the support polygon will be used throughout this paper. The support polygon, denoted by $Conv(S)$, is a convex hull formed by the contact points between the robot's mobile base and the ground. A graphical illustration of the support polygon is shown in Figure \ref{schematic1}.

\subsection{Dynamic Stability of Mobile Manipulator}
The ZMP measure allows to quantify the rollover stability margin of a mobile manipulator on arbitrary terrain using only its kinematic model and inertia parameters, instead of full dynamics model or force measurements. It is therefore our method of choice due to the potential benefit of faster trajectory planning calculations and its practicality. 

According to \cite{sugano1993}, letting $\boldsymbol{p}_i^{\mathcal{F}_0} = [x_i,y_i,z_i]^T$, $\ddot{\boldsymbol{p}}_i^{\mathcal{F}_0} = [\ddot x_i,\ddot y_i,\ddot z_i]^T$, and the gravitational vector $\boldsymbol{g}^{\mathcal{F}_0} = [g_x,g_y,g_z]^T$, the coordinates of ZMP location in the base frame $\boldsymbol{p}_{\text{zmp}}^{\mathcal{F}_0} = [x_{\text{zmp}}, y_{\text{zmp}}, 0]^T$ are expressed as:
\begin{equation}
\begin{aligned}
x_\text{\text{zmp}} &= \dfrac{\sum_i m_i(\ddot{z}_i-g_z)x_i-\sum_i m_i(\ddot{x}_i-g_x) z_i}{\sum_i m_i(\ddot{z}_i-g_z)}\\
y_\text{\text{zmp}} &= \dfrac{\sum_i m_i(\ddot{z}_i-g_z)y_i-\sum_i m_i(\ddot{y}_i-g_y) z_i}{\sum_i m_i(\ddot{z}_i-g_z)},
\end{aligned}
~~\forall~~i
\label{zmp_origin}
\end{equation}
Note that differently from the formulations presented in previous works \cite{sugano1993}, the signs in front of the gravitational acceleration components are negative since gravity should be treated as an external force on the system. Also, by definition, ZMP lies in the plane of the support polygon so that its z-coordinate in $\mathcal{F}_0$ is zero. The mass of the manipulated object and the terrain topography are assumed to be known in this work and can be accounted for in (\ref{zmp_origin}). With $Conv(S)$ representing the mobile manipulator's support polygon, the robot is dynamically stable when $\boldsymbol{p}_{\text{zmp}}\in Conv(S)$, but has the tendency to rollover otherwise. Equation (\ref{zmp_origin}) can be evaluated by substituting from the kinematics equations (\ref{fi}) and (\ref{ji}) represented in $\mathcal{F}_0$, once the joint motions have been planned, to produce the ZMP locus, i.e., the trajectory of ZMP in $\mathcal{F}_0$ during the motion of the robot.

\subsection{Optimal Control Problem Formulation}
We consider this problem in full generality by allowing the robot to be situated on slopes of different grades. With the view to optimizing the efficiency of manipulation tasks, the motions should be carried out within the shortest possible time, without the robot rolling over. Naturally, to minimize the overall time required for the robot to reach final state $\boldsymbol{x}_f$ from an initial state $\boldsymbol{x}_0$ safely, we formulate a constrained nonlinear optimal control problem (OCP) of the form:
\begin{equation}
\begin{aligned}
\min_{\boldsymbol{u}} ~~&\int_{t_0}^{t_f} 1~ \text{d} t.\\
\text{s.t.}~~& \dot{\boldsymbol{x}}=g(\boldsymbol{x},\boldsymbol{u})~~\boldsymbol{x}(t_0) = \boldsymbol{x}_0~~\boldsymbol{x}(t_f) = \boldsymbol{x}_f\\
&\boldsymbol{p}_{\text{zmp}}(\boldsymbol{x},\boldsymbol{u})\in Conv(S)\\
&\underline{\boldsymbol{x}} \preccurlyeq\boldsymbol{x}\preccurlyeq\overline{\boldsymbol{x}}\\
&\underline{\boldsymbol{u}} \preccurlyeq\boldsymbol{u}\preccurlyeq\overline{\boldsymbol{u}},
\end{aligned}
\label{opt_formulation}
\end{equation}
where $\preccurlyeq$ is defined as vector component-wise inequality, $\underline{\boldsymbol{x}}$ and $\overline{\boldsymbol{x}}$ stand for the lower and upper bounds on the system state $\boldsymbol{x}$, and $\underline{\boldsymbol{u}}$ and $\overline{\boldsymbol{u}}$ stand for the lower and upper bounds on the control input $\boldsymbol{u}$, respectively. The state and input constraints ensure that the robot's configuration, joint rates, and joint accelerations are realistic and therefore feasible when tracked. 

The optimal control problem defined in (\ref{opt_formulation}) can be solved by using a nonlinear optimal control solver such as GPOPS \cite{patt2014}, but the computational time required is too long for online motion guidance. The ZMP stability constraint (\ref{zmp_origin}) is an important contributor to high computational cost as it is nonlinear and non-convex with respect to the system's states, including the robot's joint angles and orientation.

\section{Local Manipulation Planning Under Dynamic Stability Constraint}
In most manipulation scenarios, a common task for a mobile manipulator is to interact with objects by securing, lifting, and placing them at a location either nearby or at a distance. When dissected into two general modes of operation -- mobile relocation and local manipulation -- any mobile manipulation task can be defined as a combination of these two modes, either taking place simultaneously or separately. Therefore, it is desirable to solve the two stages separately in order to avoid the computational burden caused by a high-dimensional nonlinear OCP.

The \textbf{local manipulation} planning of the mobile manipulator robot is addressed in this section. During this stage, the robot has to execute joint motions in order to reconfigure itself or interact with an object. The dynamic stability of the robot is likely to be compromised when the robot has high center of mass, when the joint motions are aggressive, when the terrain is steep, and when the object is heavy. Trajectory planning for the mobile relocation stage will be introduced in Section \ref{relocation}.
\label{manipulation}

\subsection{Simplified Model of Mobile Manipulator with Reduced Mobile Capability}
Due to the high dimensionality of the kinematics model (\ref{kin_main}), the nonlinearity that resides in the forward kinematics equations (\ref{fi}) and the ZMP formulation (\ref{zmp_origin}), the solution of the optimal manipulation planning problem (\ref{opt_formulation}) requires longer than practical computation time for online guidance, even for a case where the base does not move. In order to develop a online trajectory planner for the robot, a model simplification is called upon. 

\begin{figure}[h!]
\centering
\includegraphics[angle=0,origin=c,trim = 0mm 0mm 0mm 0mm, clip, width=8cm]{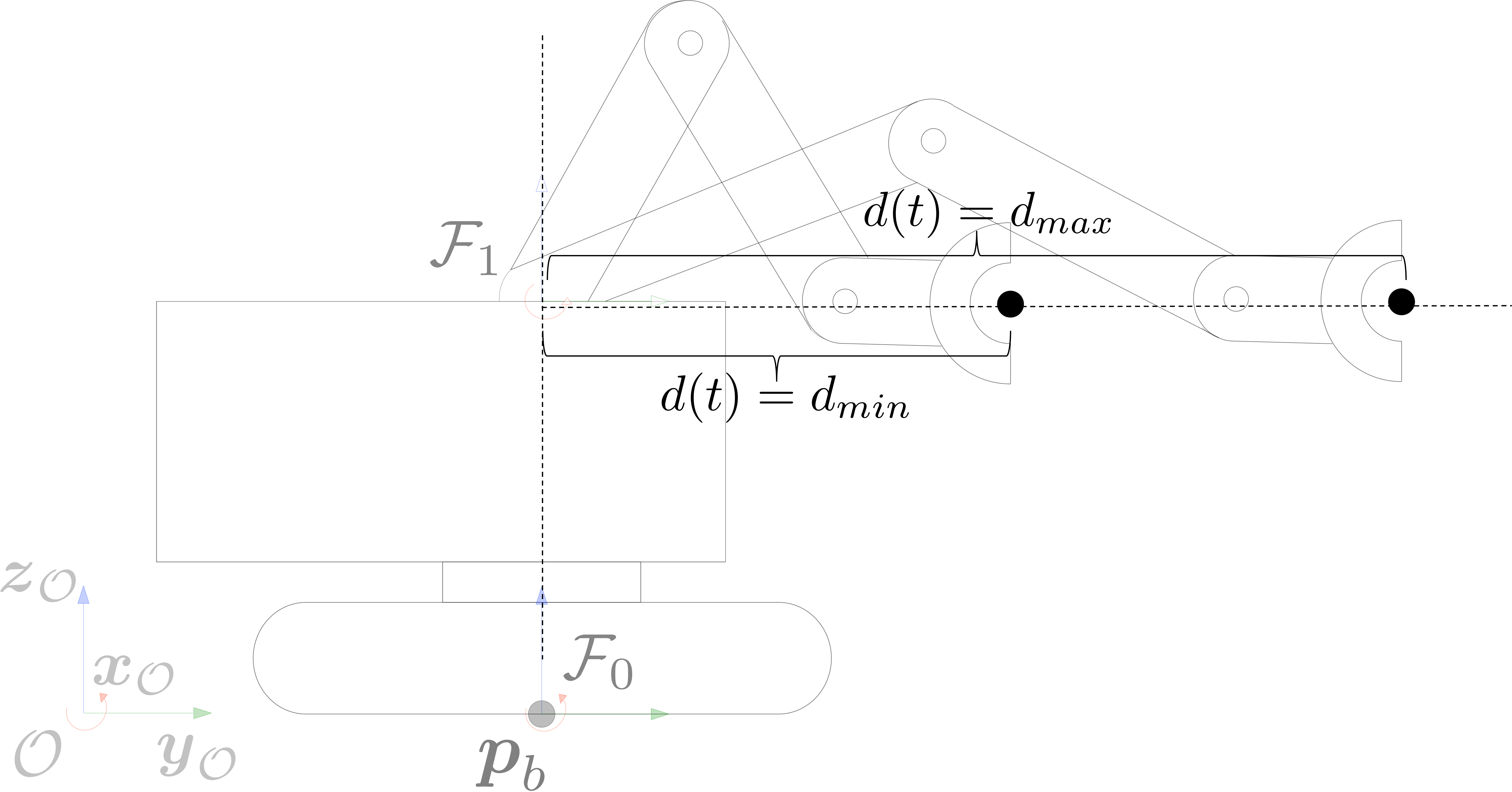}
\caption{Schematic diagram of a simplified mobile manipulator.}
\label{heli_cart}
\end{figure}

To simplify the optimal control problem by reducing the dimensionality of the problem, and considering typical modes of operation such as of a feller-buncher in timber harvesting or excavators in mining and construction, the $n$-joint-DoF manipulator arm is simplified to have 2 DoFs including the first link yaw angle $q_1(t)$ and variable end effector distance $d(t)$ from the yaw joint axis. The simplified model is illustrated schematically in Figure \ref{heli_cart}. The simplification is also based on the following assumptions: 
\begin{assumption1}
a) The mobile base has a fixed attitude and accelerates along $\boldsymbol{y}_0$ direction.
b) The $z$-coordinate of end effector does not change in $\mathcal{F}_0$. 
c) The inverse kinematics mapping between end effector distance $d$ and full model arm joint angles $\boldsymbol{q}$ except for $q_1$ is known up to the second order derivative with respect to time.
\end{assumption1}
Note that in practice, local manipulation tasks such as picking and placing rarely require complex movements of the mobile base, Assumption 1a) can be justified for cases such as excavation and timber harvesting. A formulation that allows the execution of complex mobile maneuvers is presented in Sections \ref{relocation} in the context of a relocation problem. To mitigate the limitations caused by Assumption 1b), a specially designed motion planner can manipulate the end effector into or out of the $x$-$y$ plane of $\mathcal{F}_0$, without significant loss in performance as the vertical movements required are usually small. Assumption 1c) is easily satisfied as inverse kinematics resolution method mentioned in \cite{reiter2018} is well established.

In the simplified model, the control inputs of the simplified model are $u_{q_1}$, acceleration of $q_1$, and $u_d$, the second derivative of $d$ with respect to time. The kinematics model of the simplified model is straightforward to derive and can be written in state space form:

\begin{equation}
\dot{\tilde{\boldsymbol{x}}} = \tilde{g}(\tilde{\boldsymbol{x}},\tilde{\boldsymbol{u}}),
\label{kin_simplified}
\end{equation}
where $\tilde{\boldsymbol{x}} = [q_1,\dot{q}_1,d,\dot{d}]^T$, and $\tilde{\boldsymbol{u}} = [u_{q_1}, u_d]$.

\subsection{Mapping from Simplified Model to Full Model}
To ensure that the ZMP location of the full model follows that of the simplified model, a mapping between the DoFs of the simplified model and those of the full model of the robot needs to be established. From Assumption 1c), the robot's joint angles, angular rates, and accelerations can be expressed using the states and control inputs of the simplified model as:
\begin{equation}
\begin{aligned}
\boldsymbol{q}&=[q_1, \mathcal{IK}(d)]^T\\
\boldsymbol{\dot{q}}&=[\dot{q}_1, \mathcal{IK}^1(\dot{d},d)]^T\\
\boldsymbol{\ddot{q}}&=[u_{q_1}, \mathcal{IK}^2(u_d,\dot{d},d)]^T.
\end{aligned}
\label{IK_map}
\end{equation}
Here, $\mathcal{IK}$ represents the inverse kinematics mapping referred to in Assumption 1c) equation and the superscript signifies its order of derivative with respect to time. Therefore, using the mapping provided in (\ref{kin_main}), (\ref{zmp_origin}), and (\ref{IK_map}), the ZMP location of the full kinematics model can be expressed using the simplified kinematic quantities $\tilde{\boldsymbol{x}}$ and $\tilde{\boldsymbol{u}}$ as $\tilde{\boldsymbol{p}}_{\text{zmp}}(\tilde{\boldsymbol{x}},\tilde{\boldsymbol{u}})$.

For certain limited scenarios, a further physical interpretation of the ZMP measure that allows the analytical derivation of a bang-bang type time optimal local manipulation plan is shown in our previous work in \cite{self2020}.

\subsection{Simplified Optimal Control Problem Formulation}
\subsubsection{Manipulation Planning}
To shorten the computational time so that the trajectory planner can provide online guidance, the optimal control problem (\ref{opt_formulation}) can be reformulated for the simplified model (\ref{kin_simplified}) as:
\begin{equation}
\begin{aligned}
\min_{\tilde{\boldsymbol{u}}} ~~&\int_{t_0}^{t_f} 1~ \text{d} t.\\
\text{s.t.}~~& \dot{\tilde{\boldsymbol{x}}}=\tilde{g}(\tilde{\boldsymbol{x}},\tilde{\boldsymbol{u}})~~\tilde{\boldsymbol{x}}(t_0) = \tilde{\boldsymbol{x}}_0~~\tilde{\boldsymbol{x}}(t_f) = \tilde{\boldsymbol{x}}_f\\
&\tilde{\boldsymbol{p}}_{\text{zmp}}(\tilde{\boldsymbol{x}},\tilde{\boldsymbol{u}})\in Conv(S)\\
&\underline{\tilde{\boldsymbol{x}}} \preccurlyeq\tilde{\boldsymbol{x}}\preccurlyeq\overline{\tilde{\boldsymbol{x}}}\\
&\underline{\tilde{\boldsymbol{u}}} \preccurlyeq\tilde{\boldsymbol{u}}\preccurlyeq\overline{\tilde{\boldsymbol{u}}}.
\end{aligned}
\label{opt_reformulation}
\end{equation}
It is noted that although the number of variables has been reduced, the ZMP constraint employed in (\ref{opt_reformulation}) is still derived from (\ref{zmp_origin}) using information of all joints obtained from (\ref{IK_map}). Hence, the motions generated by solving the OCP (\ref{opt_reformulation}) will guarantee ZMP constraint satisfaction for the full model. Also, we point out that in replacing the optimization problem (\ref{opt_formulation}) with (\ref{opt_reformulation}), the dimension of the OCP kinematic constraints has been reduced from $2n+12$ to $4$, and the number of control inputs has been reduced from $n+2$ to 2. However, the input constraints can no longer be placed directly on joint accelerations, although in practice, this can be resolved by tightening the bound on $\bar{\boldsymbol{u}}$ until all joint accelerations are feasible.

\subsubsection{Manipulation-coordinated Base Acceleration Planning}
Since the configuration space of the simplified model does not include the coordinates of the mobile base, planning of the base motion must be treated separately. Under Assumption 1, we have $\ddot{\boldsymbol{p}}_0^{\mathcal{F}_0}=[0,\ddot{y}_0,0]^T$, and therefore, by taking the partial derivative of $x_{\text{zmp}}$ and $y_{\text{zmp}}$ with respect to $\ddot{y}_0$, we can see that $x_{\text{zmp}}$ does not change when the base accelerates, and the variation in $y_{\text{zmp}}$ can be described by:
\begin{equation*}
\frac{\partial y_{\text{zmp}}}{\partial \ddot{y}_0} = \frac{-\sum_i m_iz_i}{\sum_i m_i(\ddot{z}_i-g_z)}.
\end{equation*}

In common manipulation operations, it is reasonable to assume that $\ddot{z}_i$'s are negligible and $z_i$'s are constants. Once  the optimal control problem is solved using the formulation (\ref{opt_reformulation}), the margin between the upper and lower edges of the support polygon to the planned ZMP trajectory, $d_u$ and $d_l$, respectively (see Figure \ref{planvbang}), can be found, and upper and lower bounds on the mobile base's linear acceleration can be derived as:
\begin{equation}
d_u\frac{\sum_i m_ig_z}{\sum_i m_iz_i}\leq \ddot{y}_0 \leq -d_l\frac{\sum_i m_ig_z}{\sum_i m_iz_i}.
\label{acc_bound}
\end{equation}
Acceleration of the mobile base can then be planned using bang-bang type control with the above as input constraint. 


\section{Rough Terrain Relocation Planning}
In this section, trajectory planning of the \textbf{relocation} stage of mobile manipulation is addressed. During relocation, a mobile manipulator has to navigate a constantly undulating terrain and, sometimes, shift its ZMP through reconfiguring the manipulator to safely move through steep slope. The topographic map of the 3D terrain that contains the robot and goal point is assumed to be known. A sampling-based path and reconfiguration planning algorithm with dynamic stability guarantee will now be introduced. The result from path planning will then be used as an initial guess to warm-start a nonlinear optimal control problem for fast convergence.
\label{relocation}


\begin{algorithm*}[h!]
\caption{ZMP-constrained Path Planning}
\begin{algorithmic}[1]
\Procedure{TerrainRRT}{$p_{init},p_{goal},q_{robot}$}
	\State $V \gets p_{init}$
	\State $E \gets \{\}$
	
	\State $Q \gets q_{robot}$
	\For{$i=1,2,\ldots, n$}
		\State $p_{rand} \gets SampleFree(i)$
		\State $p_{nearest} \gets Nearest(V, p_{rand})$
		\State $p_{node} \gets Steer(p_{nearest},p_{rand})$
		\If {ObstacleFree($p_{nearest}, p_{node}$) \textbf{and} \textcolor{blue}{ZMPStable($p_{nearest}, p_{node}, q_{robot}$)}} 
        		\State $V \gets V\cup \{p_{node}\}$
        		\State $E \gets E\cup \{(p_{nearest}, p_{node})\}$
        		\State $Q \gets Q\cup \{q_{robot}\}$
		\EndIf
		\textcolor{red}{
		\If{TreeGrowth($V,i$) $\leq$ thereshold} 
			\State $q_{robot} \gets Reconfigure(q_{robot})$ \Comment{Change robot configuration when size of $V$ grows slowly.}
		\EndIf}
	\EndFor
	\Return{$G = (V,E)$}
\EndProcedure
\end{algorithmic}
\label{TRRT}
\end{algorithm*}

\subsection{Quasi-static Path Planning Algorithm}
Sampling-based approach is chosen to solve the path planning problem due to its theoretical completeness, ability to deal with nonlinear constraints, and efficiency when search space is purposefully constructed. The sampling based planning structure is based on the RRT algorithm proposed in \cite{lavalle2001}. The pseudo-code that illustrates the procedure of path planning with ZMP constraint is shown in Algorithm \ref{TRRT}.  The modification made to accommodate stability constraint is highlighted in \textcolor{blue}{blue}, and the modification made to allow reconfiguration is highlighted in \textcolor{red}{red}.

The RRT algorithm, specifically, is chosen due to its ability to quickly find a path when new terrain features and obstacles such as human, fallen trees, or large rocks appear on the map. However, the ZMP constraint accommodation can also be directly implemented within other sampling based planning algorithms. 

The robot is considered to be quasi-statically moving through the terrain in order to remove its linear and angular acceleration from the search space. The robot only reconfigures its joint angles when the tree growth is slow. As a result, the search space dimension is further reduced on mild terrain, and the path searching time can be drastically reduced.

\subsubsection{Stability Guarantee}
To ensure the sampling based algorithm can generate a quasi-static path that is guaranteed to be ZMP-stable in continuous execution, the $\text{\textcolor{blue}{ZMPStable}}$ function on line 9 of Algorithm \ref{TRRT} checks that the stability conditions to be introduced in Section \ref{sta_guaran} are met.

\subsubsection{Reconfiguration}
As the path planning tree growth reaches the steep section in a terrain map, new nodes will become less likely to be added to the tree. This is due to stability constraints becoming harder to satisfy. To remedy this problem, random joint reconfiguration is executed whenever the number of new nodes added to the tree is lower than a threshold after certain number of iterations. In this way, the robot will be able to navigate through steep slopes by reconfiguring its arm. The method introduced in Section \ref{manipulation} is used to further plan the motion trajectory of each reconfiguration.

\subsection{Finite-time Feasibility Guarantee}
Due to the quasi-static nature of the path planning stage, the inertial terms in the ZMP constraints are ignored. Therefore, a quasi-static path generated with Algorithm \ref{TRRT} does not guarantee that a robot can travel to its destination without violating the dynamic stability constraint when the robot's components experience non-zero linear and angular accelerations. In order to address this issue, the following theorem is introduced:
\begin{theorem}
Along a path $\tau: I \to \boldsymbol{x}$, where $I=[0,1]$ is a unit interval and the robot's states $\boldsymbol{x}$ are quasi-static with $\dot{\boldsymbol{x}}=0$, if the ZMP location $\boldsymbol{p}_{zmp}(\tau)$ belongs to the interior of $Conv(S)~\forall~\tau$, $\exists$ $\boldsymbol{u}$ for the trajectory planning problem (\ref{opt_formulation}) such that $t_f=T<<\infty$.
\label{t1}
\end{theorem}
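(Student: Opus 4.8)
The plan is to exploit the fact that the quasi-static ZMP checked by \textcolor{blue}{ZMPStable} is precisely the zero-acceleration limit of the full ZMP in (\ref{zmp_origin}), and to recover a finite-time trajectory from the geometric path $\tau$ by a rest-to-rest time reparametrization whose velocities and accelerations can be driven uniformly to zero. Writing the full ZMP as a function of configuration and of the inertial accelerations $\ddot{\boldsymbol{p}}_i^{\mathcal{F}_0}$ from (\ref{ji}), I would decompose it as $\boldsymbol{p}_{\text{zmp}} = \boldsymbol{p}_{\text{zmp}}^{qs}(\boldsymbol{x}) + \boldsymbol{\delta}$, where $\boldsymbol{p}_{\text{zmp}}^{qs}$ is the quasi-static value obtained by setting $\ddot{\boldsymbol{p}}_i^{\mathcal{F}_0}=0$ and $\boldsymbol{\delta}$ is a perturbation that is continuous in the accelerations and vanishes when they do. The key structural observation is that both numerator and denominator in (\ref{zmp_origin}) are affine in the accelerations, with configuration-dependent (hence bounded) coefficients along $\tau$, and the denominator reduces to $-\sum_i m_i g_z$ in the quasi-static limit, which is bounded away from zero whenever gravity has a nonzero component along $\boldsymbol{z}_0$ (true on any non-vertical terrain).

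First I would establish a uniform stability margin. Since $\tau$ is a continuous map of the compact interval $I=[0,1]$ and the support polygon $Conv(S)$ is fixed in $\mathcal{F}_0$, the hypothesis $\boldsymbol{p}_{\text{zmp}}^{qs}(\tau)\in\text{int}\,Conv(S)$ for all $\tau$ together with continuity and compactness (extreme value theorem) yields an $\epsilon>0$ with $\text{dist}(\boldsymbol{p}_{\text{zmp}}^{qs}(\tau(s)),\partial Conv(S))\geq\epsilon$ for every $s\in I$. This is where the interior (open) assumption is essential: a boundary-grazing quasi-static path would leave no room to absorb inertial effects.

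Next I would introduce the time scaling $s(t)=h(t/T)$, with $h:[0,1]\to[0,1]$ a fixed smooth profile satisfying $h(0)=0$, $h(1)=1$ and $h'(0)=h'(1)=0$, so that the resulting trajectory starts and ends at rest as required by the quasi-static endpoints. Differentiating through (\ref{ji}) gives $\dot{\bar{\boldsymbol{q}}}=O(1/T)$ and $\ddot{\bar{\boldsymbol{q}}}=O(1/T^2)$ uniformly in $s$, and since $\ddot{\boldsymbol{p}}_i^{\mathcal{F}_0}$ is quadratic in $\dot{\bar{\boldsymbol{q}}}$ plus linear in $\ddot{\bar{\boldsymbol{q}}}$, every inertial acceleration is $O(1/T^2)$. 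Feeding this into the affine-in-acceleration ZMP expression and using the uniform lower bound on the denominator, I obtain $\sup_{t}\|\boldsymbol{\delta}\|\to 0$ as $T\to\infty$. Choosing $T$ finite but large enough that this supremum falls below $\epsilon$ keeps $\boldsymbol{p}_{\text{zmp}}(t)\in Conv(S)$ for all $t\in[0,T]$; the same vanishing of $\dot{\bar{\boldsymbol{q}}}$ and $\ddot{\bar{\boldsymbol{q}}}$ ensures the state and input box constraints in (\ref{opt_formulation}) hold for large $T$, and the control $\boldsymbol{u}$ realizing this trajectory exists because $\tau$ is kinematically admissible by construction. This produces a feasible $\boldsymbol{u}$ with $t_f=T<\infty$.

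I expect the main obstacle to be making the perturbation bound genuinely uniform along the whole path rather than pointwise: this requires controlling $\boldsymbol{\delta}$ simultaneously through the bounded configuration-dependent coefficients, through the denominator staying away from zero, and through the vanishing accelerations, and then matching these scales so that a single choice of $T$ works at every $s\in I$. A secondary point worth stating carefully is the denominator non-degeneracy assumption (non-vertical terrain), without which the ZMP map in (\ref{zmp_origin}) is not even well defined and the argument breaks down.
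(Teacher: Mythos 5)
Your proposal is correct, and it reaches the conclusion by a genuinely different route from the paper. The paper's proof decomposes the quasi-static path into two motion primitives, forward translation along short straight segments and skid-steered turns, and for each primitive it bounds $\|\boldsymbol{p}_{zmp}^{\text{dynamic}}-\boldsymbol{p}_{zmp}^{\text{static}}\|_2$ explicitly in terms of the relevant inputs: proportional to $|u_a|$ for forward motion (inequality (\ref{forward_bound})) and to $\dot{\psi}^2$ and $|u_\psi|$ for turning (inequality (\ref{turn_bound})), then asserts that inputs can be chosen small enough to keep the deviation below an arbitrary $\sigma>0$ while keeping $v>0$ and $\dot{\psi}>0$. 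You instead treat the whole path at once: a rest-to-rest time reparametrization $s(t)=h(t/T)$ makes every inertial acceleration $O(1/T^2)$, and the affine-in-acceleration structure of (\ref{zmp_origin}) with a denominator bounded away from $-\sum_i m_i g_z \neq 0$ converts this into a uniform $O(1/T^2)$ bound on the ZMP perturbation. Your version buys two things the paper leaves implicit: it actually uses the \emph{interior} hypothesis, via compactness of $I$ and continuity of $\boldsymbol{p}_{\text{zmp}}^{qs}\circ\tau$, to extract a strictly positive margin $\epsilon$ against which the perturbation is measured (the paper never converts ``arbitrarily small $\sigma$'' into a single admissible choice), and it does not depend on the path being composed of straight segments and pure turns. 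What the paper's version buys in exchange is constructive content: the per-primitive bounds feed directly into the planner (e.g., the acceleration bound (\ref{acc_bound}) and the RRT's segment-plus-turn structure), whereas your $T\to\infty$ argument proves existence without suggesting how large $T$ must be. One point you should still tighten: the claim that a realizing control $\boldsymbol{u}$ exists ``because $\tau$ is kinematically admissible by construction'' deserves a sentence acknowledging the unicycle's nonholonomic constraint — it holds here precisely because the sampled path consists of motions compatible with (\ref{kin_main}), which is the same structural fact the paper's primitive decomposition makes explicit.
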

A proof of Theorem \ref{t1} is provided in the Appendix section. With Theorem \ref{t1} backing, the method to find a quasi-static path that satisfies the ZMP constraint is presented in the following subsections.

\subsection{Stability Guaranteed Sampling-based Path Search for Continuous Motion}
\label{sta_guaran}
To plan a robot path on 3-D terrain using a sampling-based algorithm, such as RRT, nonlinear constraints are checked at each sampling point to ensure constraint violations do not occur. However, to keep the computation time reasonable, constraint satisfaction cannot be verified continuously in between sampling points. Since the ZMP constraint (\ref{zmp_origin}) is non-convex with respect to the robot's base orientation, even if the robot is dynamically stable at two neighboring sampling points, it is not guaranteed to be dynamically stable between two sampling points. To resolve this issue, an analytical method to determine the ZMP stability between sampling points is desired.

In the stability constrained RRT implementation, ZMP constraint satisfaction is unchecked in the following two scenarios. The first is when the robot's base changes its heading direction at a particular sampling location, and the second is as the robot quasi-statically ``slides" from one sampling location to the next. Since the two instances happen separately in the RRT implementation, they can be dealt with individually. In the following analysis, we will make use of $g_z>0$, since we do not expect the robot to drive up vertical walls. In addition, under the quasi-static condition, (\ref{zmp_origin}) reduces to
\begin{equation}
x_\text{zmp} = \dfrac{M_x g_z-M_z g_x}{M g_z},~~
y_\text{zmp} = \dfrac{M_y g_z-M_z g_y}{M g_z},
\label{zmp_slide}
\end{equation}
where $M_x=\sum_i m_i x_i$, $M_y=\sum_i m_i y_i$, $M_z=\sum_i m_i z_i$, and $M=\sum_i m_i$. When the same robot is located on the horizontal terrain, according to (\ref{zmp_slide}), its ZMP location is further simplified to:
\begin{equation}
        x_{\text{zmp,0}}=\frac{M_x}{M},~~
        y_{\text{zmp,0}}=\frac{M_y}{M}.
        \label{zmp0}
\end{equation}

\subsubsection{Mobile Base Attitude}
To analyze the ZMP stability of the two aforementioned scenarios, we first need to define the robot base attitude on arbitrary 3-D terrain. With $\boldsymbol{p}_b=[s_x, s_y, s_z]^T$ representing the 3-D coordinate of terrain surface in inertial frame $\mathcal{O}$ at the location of the mobile base, the terrain can be expressed by the following nonlinear function:
\begin{equation}
    s_z = h(s_x, s_y).
\end{equation}

Let us denote the rotation matrix that describes the attitude change of the base-fixed frame $\mathcal{F}_0$ from the inertial frame $\mathcal{O}$ as $\boldsymbol{R}=\{r_{ij}\}~(i,j\!\!=\!\!1,2,3)$, and since the base-fixed $z_0$ axis of a mobile manipulator remains perpendicular to terrain surface under normal operations, defining the directional gradients of the terrain $\boldsymbol{h}_x = \left[1,0,\frac{\partial h}{\partial s_x}\Bigr|_{s_x,s_y}\right]^T$ and $\boldsymbol{h}_y = \left[0,1,\frac{\partial h}{\partial s_y}\Bigr|_{s_x,s_y}\right]^T$, the third column of $\boldsymbol{R}$ can be represented as:
\begin{equation*}
    \hat{\boldsymbol{r}}_3 = \frac{\boldsymbol{h}_x \times \boldsymbol{h}_y}{\left|\left| \boldsymbol{h}_x \times \boldsymbol{h}_y \right| \right|}.
\end{equation*}

Let us denote the heading angle of the mobile base projected onto the 2-D plane $\boldsymbol{x}_{\mathcal{O}}$-$\boldsymbol{y}_{\mathcal{O}}$ as $\bar{\psi}$. The second column of $\boldsymbol{R}$ describes the mobile base's heading direction on the 3-D surface and can then be written as:
\begin{equation*}
    \hat{\boldsymbol{r}}_2 = \frac{[\boldsymbol{h}_x, \boldsymbol{h}_y][-\sin{\bar{\psi}}, \cos{\bar{\psi}}]^T}{\left|\left|[\boldsymbol{h}_x, \boldsymbol{h}_y][-\sin{\bar{\psi}}, \cos{\bar{\psi}}]^T\right|\right|}.
\end{equation*}
Hence, the first column of $\boldsymbol{R}$ can be found using:
\begin{equation*}
    \hat{\boldsymbol{r}}_1 = \frac{ \hat{\boldsymbol{r}}_2 \times \hat{\boldsymbol{r}}_3}{\left|\left| \hat{\boldsymbol{r}}_2 \times \hat{\boldsymbol{r}}_3\right|\right|}.
\end{equation*}

\subsubsection{Stability Guarantee During Base Turn}
Inspecting (\ref{zmp_slide}), it is clear that $g_x$, $g_y$, and $g_z$ are the only quantities that vary with the robot's orientation. As a robot whose attitude is described by the rotation matrix $\boldsymbol{R}$ executes a skid-steered turn through angle $\psi^t$, the gravity components are affected by the yaw angle $\psi^t$ and can be expressed as:
\begin{equation}
    \begin{aligned}
        \begin{bmatrix}
            g^t_x\\
            g^t_y\\
            g^t_z
        \end{bmatrix}&=
        \begin{bmatrix}
            \cos{\psi^t}&\sin{\psi^t}&0\\
            -\sin{\psi^t}&\cos{\psi^t}&0\\
            0&0&1
        \end{bmatrix}
        \underbrace{\begin{bmatrix}
            \hat{\boldsymbol{r}}_1^T\\
            \hat{\boldsymbol{r}}_2^T\\
            \hat{\boldsymbol{r}}_3^T
        \end{bmatrix}}_{\boldsymbol{R}^T}
        \begin{bmatrix}
            0\\0\\g
        \end{bmatrix}\\
        &=\begin{bmatrix}
            r_{13}\cos{\psi^t}g+r_{23}\sin{\psi^t}g\\
            -r_{13}\sin{\psi^t}g+r_{23}\cos{\psi^t}g\\
            r_{33}g
        \end{bmatrix}.
    \end{aligned}
    \label{g_comp}
\end{equation}
The ZMP after the turn is then found from:
\begin{equation}
x^t_\text{zmp} = \dfrac{M_x g^t_z-M_z g^t_x}{M g^t_z},~
y^t_\text{zmp} = \dfrac{M_y g^t_z-M_z g^t_y}{M g^t_z}.
\label{zmp_afturn}
\end{equation}

To aid in the illustration of stability guarantee during a skid-steered turn, the following claim is presented with a proof that follows:
\begin{proposition}
As a wheeled/tracked robot executes a continuous unidirectional quasi-static turn through angle $\psi^t$ along its yaw axis, the ZMP of the robot traverses, monotonically on the support polygon, an arc of angle $\psi^t$, radius $\frac{r_{13}^2M_z^2+r_{23}^2M_z^2}{r_{33}^2M^2}$, and center $[x_{\text{zmp,0}},~y_{\text{zmp,0}}]^T$.
\label{prop1}
\end{proposition}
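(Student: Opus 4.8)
The plan is to obtain an explicit parametrization of the ZMP as a function of the turn angle $\psi^t$ and then recognize it as uniform circular motion. First I would substitute the gravity components from (\ref{g_comp}) into the quasi-static ZMP expressions (\ref{zmp_afturn}). Because $g^t_z = r_{33}g$ does not depend on $\psi^t$, the denominators cancel cleanly and each coordinate splits into a constant term plus a term linear in $(\cos\psi^t,\sin\psi^t)$. Using $x_{\text{zmp,0}}=M_x/M$ and $y_{\text{zmp,0}}=M_y/M$ from (\ref{zmp0}), the constant part is exactly the claimed center, so writing $\Delta x = x^t_\text{zmp}-x_{\text{zmp,0}}$ and $\Delta y = y^t_\text{zmp}-y_{\text{zmp,0}}$ I expect
\begin{equation*}
\Delta x = -\frac{M_z}{M r_{33}}\bigl(r_{13}\cos\psi^t + r_{23}\sin\psi^t\bigr),\quad
\Delta y = \frac{M_z}{M r_{33}}\bigl(r_{13}\sin\psi^t - r_{23}\cos\psi^t\bigr).
\end{equation*}

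Second, I would compute $(\Delta x)^2+(\Delta y)^2$. The cross terms proportional to $\cos\psi^t\sin\psi^t$ cancel and the $\cos^2+\sin^2$ pairs collapse, leaving the $\psi^t$-independent constant $\frac{r_{13}^2M_z^2+r_{23}^2M_z^2}{r_{33}^2M^2}$, which is precisely the radius stated in the proposition. This shows the ZMP remains at a fixed distance from the center $[x_{\text{zmp,0}},y_{\text{zmp,0}}]^T$, i.e., it lies on a circular arc.

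Third, for monotonicity and the arc angle, I would differentiate the parametrization with respect to $\psi^t$. The tidy observation is that $\frac{d}{d\psi^t}(\Delta x,\Delta y)=(\Delta y,-\Delta x)$, so the velocity is a rigid $90^\circ$ rotation of the radial vector: it is always tangent to the circle, never vanishes on a slope, and has magnitude equal to the radius. This is uniform circular motion with unit angular speed in $\psi^t$, so the polar angle of the ZMP is an affine function of $\psi^t$ with slope $\pm1$; hence the traversal is monotone and the total subtended angle equals $\psi^t$. Equivalently, substituting $r_{13}=\rho\cos\alpha$, $r_{23}=\rho\sin\alpha$ collapses $\Delta x$ and $\Delta y$ into $\mp\rho\cos(\psi^t-\alpha)$ and $\rho\sin(\psi^t-\alpha)$ up to the common factor $M_z/(Mr_{33})$, from which the affine dependence of the polar angle is read off directly.

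The main obstacle is conceptual rather than computational: handling degeneracy and fixing orientation. On horizontal ground $r_{13}=r_{23}=0$, the radius collapses to zero and the ``arc'' degenerates to the single point $[x_{\text{zmp,0}},y_{\text{zmp,0}}]^T$, so the statement implicitly assumes a genuine slope; likewise $M_z/r_{33}\neq 0$ is needed for a nondegenerate circle. I would also track the sign of this factor to determine the sense (clockwise versus counterclockwise) of the sweep, thereby confirming that a unidirectional turn produces a unidirectional, monotone traversal of the arc.
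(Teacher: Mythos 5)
Your proposal is correct and follows essentially the same route as the paper: substitute the yaw-rotated gravity components (\ref{g_comp}) into the quasi-static ZMP expressions (\ref{zmp_afturn}), subtract the flat-ground ZMP (\ref{zmp0}), and observe that $[\Delta x_{\text{zmp}},\,\Delta y_{\text{zmp}}]^T$ is a planar rotation by $\psi^t$ of a fixed vector about $[x_{\text{zmp,0}},\,y_{\text{zmp,0}}]^T$ --- the paper stops at that observation, whereas you additionally verify the constant radius, the monotone sweep, and the degenerate flat-ground case. One small point your own computation exposes: $(\Delta x)^2+(\Delta y)^2$ is the \emph{squared} distance to the center, so the quantity $\frac{r_{13}^2M_z^2+r_{23}^2M_z^2}{r_{33}^2M^2}$ quoted as the ``radius'' in the proposition (and which you match) is actually the square of the true radius $\frac{|M_z|\sqrt{r_{13}^2+r_{23}^2}}{|r_{33}|M}$.
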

\begin{proof}
By defining $\Delta x_{\text{zmp}}=x^t_{\text{zmp}}-x_{\text{zmp,0}}$, and $\Delta y_{\text{zmp}}=y^t_{\text{zmp}}-y_{\text{zmp,0}}$, the difference between (\ref{zmp_afturn}) and (\ref{zmp0}) can be represented as:
\begin{equation}
    \Delta x_{\text{zmp}} = \frac{M_zg^t_x}{-Mg^t_z}, ~~
    \Delta y_{\text{zmp}} = \frac{M_zg^t_y}{-Mg^t_z}.
    \label{dzmp}
\end{equation}
Substituting (\ref{g_comp}) into (\ref{dzmp}), we obtain
\begin{equation}
    \begin{bmatrix}
         \Delta x_{\text{zmp}}\\
         \Delta y_{\text{zmp}}
    \end{bmatrix} =
    \begin{bmatrix}
        \cos{\psi^t} & \sin{\psi^t}\\
        -\sin{\psi^t} & \cos{\psi^t}
    \end{bmatrix}
    \begin{bmatrix}
        \frac{r_{13}M_z}{-r_{33}M}\\
        \frac{r_{23}M_z}{-r_{33}M}
    \end{bmatrix}.
\end{equation}
The above represents a rotation in the $x_0$-$y_0$ plane of ZMP location by angle $\psi^t$ with respect to $[x_{\text{zmp,0}},~y_{\text{zmp,0}}]^T$.
\end{proof}

With the help of Proposition \ref{prop1}, graphically illustrated in Figure \ref{turn_traj}, the ZMP stability of a robot during a quasi-static turn can be checked analytically for a given initial base attitude, arm configuration, and turn angle. As a result, the path planning algorithm is able to generate turns that are guaranteed to be continuously ZMP-stable.

\begin{figure}[h!]
\centering
\captionsetup[subfigure]{width=0.9\textwidth,justification=raggedright}
\begin{subfigure}{.25\textwidth}
\centering
\includegraphics[angle=0,origin=c,trim = 0mm 0mm 0mm 0mm, clip, width=3cm]{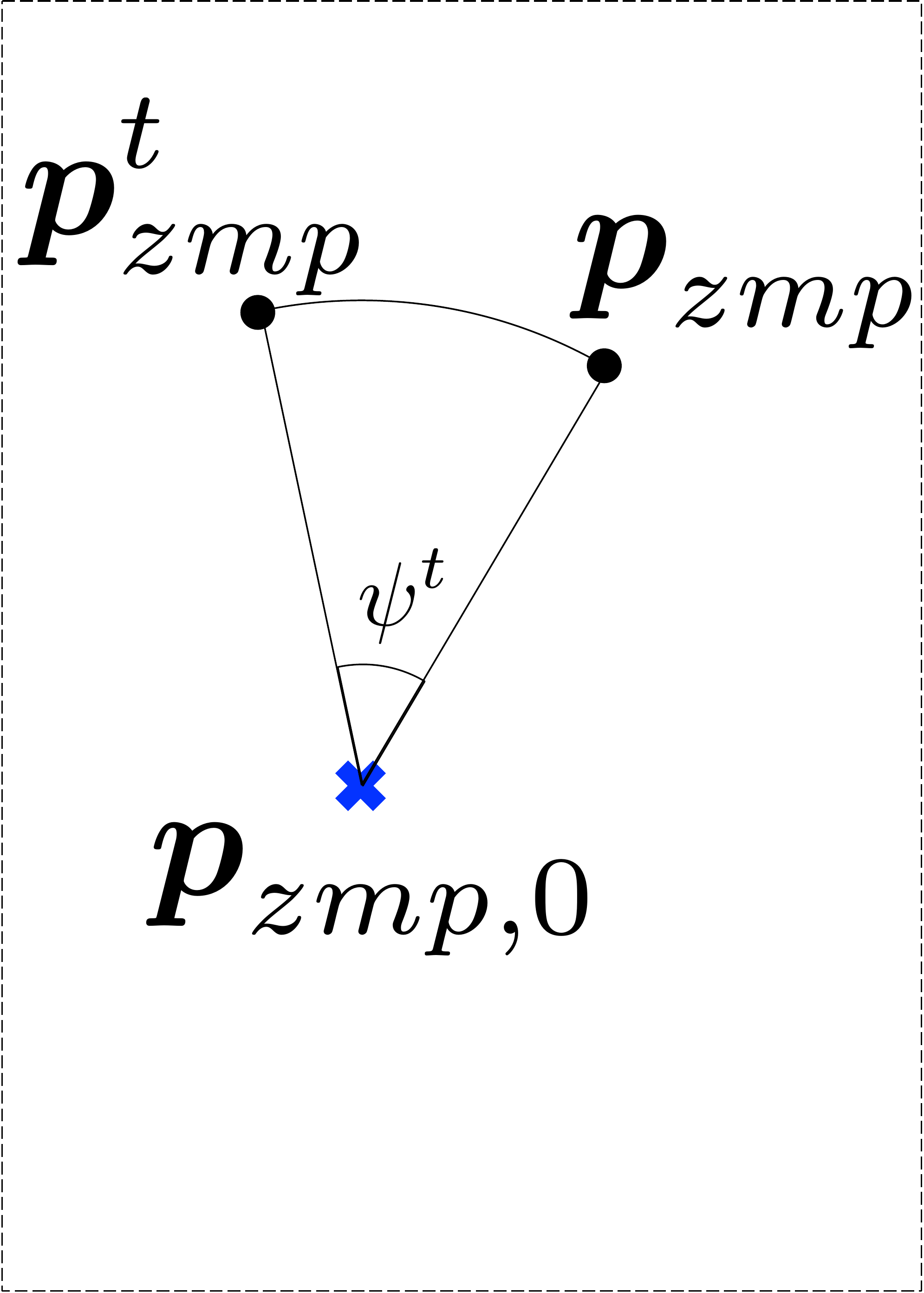}
\caption{Arc formed by ZMP trajectory during a turn.}
\label{turn_traj}
\end{subfigure}%
\begin{subfigure}{.25\textwidth}
\centering
\includegraphics[angle=0,origin=c,trim = 0mm 0mm 0mm 0mm, clip, width=3cm]{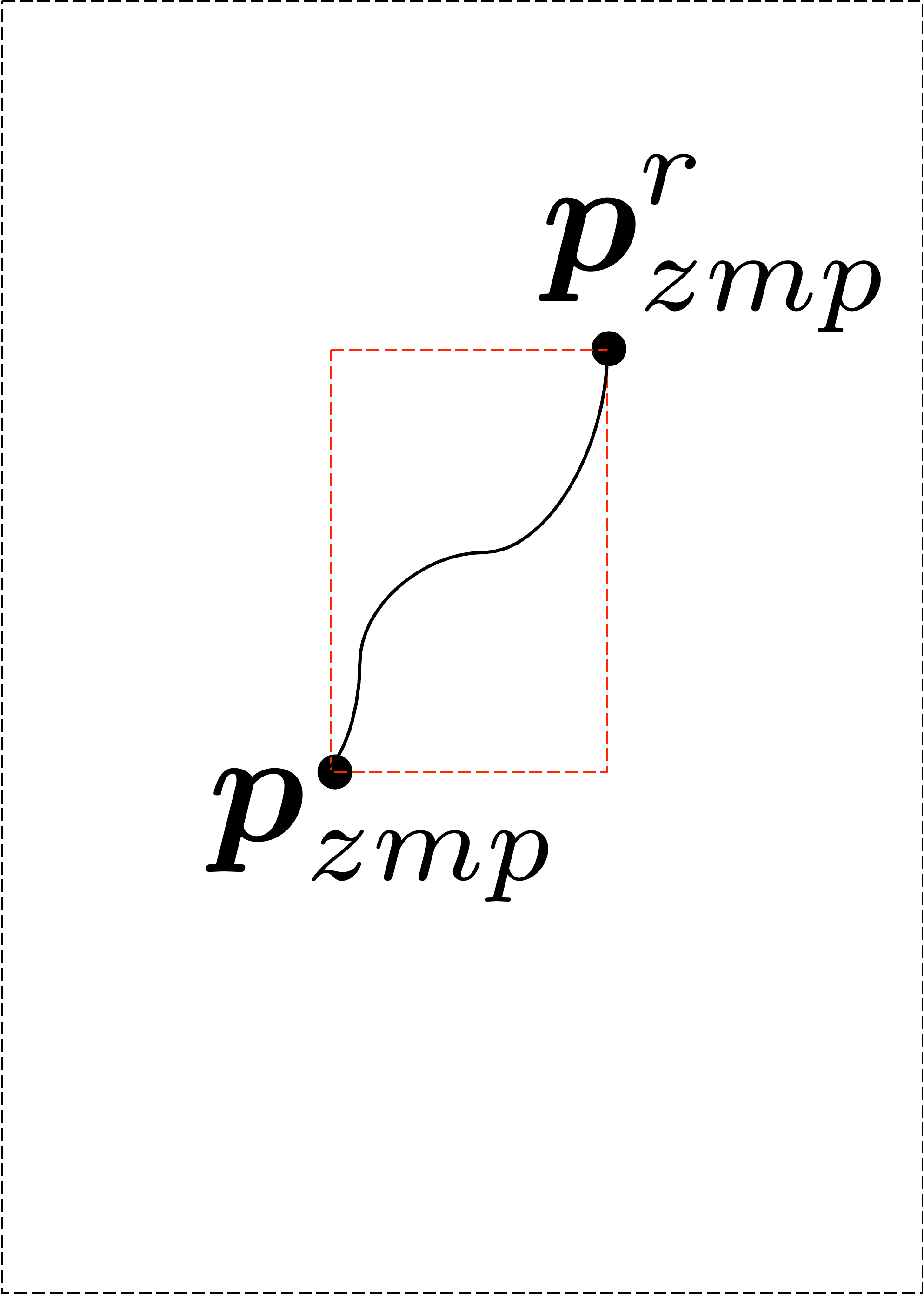}
\caption{ZMP trajectory and its rectangular bound (red) during relocation.}
\label{move_traj}
\end{subfigure}%
\caption{Graphical illustration of ZMP trajectory during robot base turn and mobile relocation.}
\label{change_traj}
\end{figure}

\subsubsection{Stability Guarantee During Base Relocation}
During base relocation between two neighboring sampling points, it is reasonable to represent the terrain-induced attitude change of the robot's base as a pitch-roll sequence. By restricting the distance between two sampling points to be ``small" during sampling point generation, we expect the changes of pitch and roll angle to be small and monotonic. With pitch and roll angle changes from the given base orientation $\boldsymbol{R}$ represented by $\theta^r$ and $\phi^r$, respectively, the perturbed gravity components can be written as:
\begin{equation}
    \begin{aligned}
        \begin{bmatrix}
            g^r_x\\
            g^r_y\\
            g^r_z
        \end{bmatrix}&\!\!=\!\!
        \begin{bmatrix}
            \cos{\phi^r} & \sin{\phi^r}\sin{\theta^r} & -\sin{\phi^r}\cos{\theta^r}\\
            0 & \cos{\theta^r} & \sin{\theta^r}\\
            \sin{\phi^r} & -\cos{\phi^r}\sin{\theta^r} & \cos{\phi^r}\cos{\theta^r}
        \end{bmatrix}\!\!\!
        \underbrace{\begin{bmatrix}
            \hat{\boldsymbol{r}}_1^T\\
            \hat{\boldsymbol{r}}_2^T\\
            \hat{\boldsymbol{r}}_3^T
        \end{bmatrix}}_{\boldsymbol{R}^T}\!\!\!
        \begin{bmatrix}
            0\\0\\g
        \end{bmatrix}\\&\!\!=\!\!
        \begin{bmatrix}
            r_{13}\!\cos{\phi^r}\!g\!+\!r_{23}\sin{\phi^r}\sin{\theta^r}g\!-\!r_{33}\sin{\phi^r}\cos{\theta^r}\!g\\
            r_{23}\cos{\theta^r}g+r_{33}\sin{\theta^r}g\\
            r_{13}\!\sin{\phi^r}\!g\!-\!r_{23}\cos{\phi^r}\sin{\theta^r}g\!+\!r_{33}\cos{\phi^r}\cos{\theta^r}\!g
        \end{bmatrix}\!\!.
    \end{aligned}
    \label{g_comp2}
\end{equation}
Substituting (\ref{g_comp2}) into (\ref{zmp_slide}), the following is obtained to describe the ZMP location as a function of pitch and roll angles due to relocation:
\begin{equation}
    \begin{aligned}
        x^r_{\text{zmp}}&=\frac{M_x}{M}-\frac{M_z}{M}G_1\\
        y^r_{\text{zmp}}&=\frac{M_y}{M}-\frac{M_z}{M}G_2,
    \end{aligned}
    \label{zmp_comp2}
\end{equation}
where
\begin{equation}
    \begin{aligned}
        G_1 &= \frac{r_{13}\cos{\phi^r}+r_{23}\sin{\phi^r}\sin{\theta^r}-r_{33}\sin{\phi^r}\cos{\theta^r}}{r_{13}\sin{\phi^r}-r_{23}\cos{\phi^r}\sin{\theta^r}+r_{33}\cos{\theta^r}\cos{\phi^r}}\\
        G_2 &= \frac{r_{23}\cos{\theta^r}+r_{33}\sin{\theta^r}}{r_{13}\sin{\phi^r}-r_{23}\cos{\phi^r}\sin{\theta^r}+r_{33}\cos{\theta^r}\cos{\phi^r}}.
    \end{aligned}
\end{equation}

The stability guarantee during base relocation between neighboring sampling points is presented as the following proposition with a proof:
\begin{proposition}
As a wheeled/tracked robot relocates from one sampling point to the next through a continuous motion, the ZMP locations are guaranteed to be bounded by a rectangle whose edges are parallel and perpendicular to the robot's heading direction; as well, the rectangle's non-neighboring corners are the ZMPs at the two sampling points that the robot is transitioning between.
\label{prop2}
\end{proposition}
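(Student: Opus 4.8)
The plan is to exhibit, for the continuous transition between the two sampling points, a scalar parameterization $s\in[0,1]$ under which the terrain-induced pitch $\theta^r(s)$ and roll $\phi^r(s)$ both vary monotonically from $0$ at the first node to their terminal values at the second; the small inter-sample spacing enforced during sampling justifies this monotone, small-angle regime. The target then reduces to a statement about the two scalar functions $x^r_{\text{zmp}}(s)$ and $y^r_{\text{zmp}}(s)$ defined by (\ref{zmp_comp2}): if each of them is monotone in $s$, then the ZMP trajectory is trapped inside the axis-aligned bounding box of its two endpoints. Because (\ref{zmp_comp2}) is written in the base frame, that bounding box has its edges parallel to $\boldsymbol{x}_0$ and $\boldsymbol{y}_0$; identifying $\boldsymbol{y}_0$ with the heading direction, its sides are exactly parallel and perpendicular to the heading, and the two endpoint ZMPs sit at opposite corners, which is the claim.

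The geometric content that makes this work is the decoupling visible in (\ref{zmp_comp2}): the horizontal offset of the ZMP from the CoM projection is $-\tfrac{M_z}{M}[G_1,G_2]^\top$, i.e. the intersection of the gravity line through the CoM with the support plane. A pitch $\theta^r$ about $\boldsymbol{x}_0$ tilts the gravity vector within the $\boldsymbol{y}_0$-$\boldsymbol{z}_0$ plane and hence drives the ZMP primarily along $\boldsymbol{y}_0$ (indeed the numerator of $G_2$ in (\ref{g_comp2}) depends on $\theta^r$ alone), while a roll $\phi^r$ about $\boldsymbol{y}_0$ tilts gravity within the $\boldsymbol{x}_0$-$\boldsymbol{z}_0$ plane and drives the ZMP primarily along $\boldsymbol{x}_0$. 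This is the analogue, for a tilt rather than a yaw, of the arc result of Proposition \ref{prop1}, and it is what aligns the bounding rectangle with the heading.

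First I would establish the monotonicity of each coordinate. Since under normal operation $\boldsymbol{z}_0$ stays close to the terrain normal and $g_z>0$, the common denominator of $G_1$ and $G_2$ --- the expression $r_{13}\sin\phi^r-r_{23}\cos\phi^r\sin\theta^r+r_{33}\cos\theta^r\cos\phi^r$ in (\ref{g_comp2}) --- is positive and bounded away from zero throughout the small-angle transition. I would then differentiate $x^r_{\text{zmp}}$ and $y^r_{\text{zmp}}$ with respect to $s$ and show that each total derivative keeps a fixed sign, using $\dot\theta^r,\dot\phi^r$ of fixed sign together with the positivity of the denominator and the dominance of the $r_{33}$ terms for a near-upright base.

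The hard part is precisely this coupling: although $G_2$ has a numerator in $\theta^r$ only, both $G_1$ and $G_2$ share the denominator that mixes $\theta^r$ and $\phi^r$, so monotonicity of the angles does not transfer to the ZMP coordinates for free. I would control this either by bounding the cross-terms in the small-angle regime so that the sign of each total derivative is dictated by its dominant $r_{33}$ contribution, or, failing a clean global bound, by splitting $[0,1]$ into a pure-pitch sub-phase (roll held fixed) followed by a pure-roll sub-phase and showing that neither sub-phase lets either coordinate leave the endpoint interval. Once both $x^r_{\text{zmp}}(s)$ and $y^r_{\text{zmp}}(s)$ are confined between their endpoint values, the ZMP trajectory lies in the rectangle of Figure \ref{move_traj}, completing the proof.
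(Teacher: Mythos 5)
Your proposal follows essentially the same route as the paper: evaluate the partial derivatives of $G_1$ and $G_2$ in the small-angle regime to show that $x^r_{\text{zmp}}$ varies monotonically with roll alone and $y^r_{\text{zmp}}$ with pitch alone, then combine this with the monotone, small inter-sample attitude change to trap each coordinate between its endpoint values, yielding the heading-aligned rectangle of Figure \ref{move_traj}. You are in fact somewhat more careful than the paper, which simply sets the cross-partials to ``$\approx 0$'' at $\phi^r\approx\theta^r\approx 0$ without bounding the coupling through the shared denominator that you correctly identify as the delicate step.
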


\begin{proof}
To find the variation in ZMP location with respect to $\phi^r$ and $\theta^r$ each, we take the corresponding partial derivatives of $G_1$ and $G_2$. Furthermore, with the previously stated assumption that the distances between neighboring sampling points are small, we evaluate the partial derivatives at zero angles $\phi^r\approx 0$ and $\theta^r\approx 0$, and obtain:
\begin{equation}
    \begin{aligned}
        \frac{\partial G_1}{\partial \phi^r} &= -1-\frac{r_{13}^2}{r_{33}^2}\\
        \frac{\partial G_1}{\partial \theta^r} &= -\frac{r_{13}r_{23}}{r_{33}^2}\\
        \frac{\partial G_2}{\partial \phi^r} &= -\frac{r_{13}r_{23}}{r_{33}^2}\\
        \frac{\partial G_2}{\partial \theta^r} &= 1+\frac{r_{23}^2}{r_{33}^2}.
    \end{aligned}
    \label{pds}
\end{equation}
Recalling the orthonormality of the rotation matrix so that $r_{13}^2+r_{23}^2+r_{33}^2=1$, the following can be inferred from (\ref{pds}):
\begin{equation}
    \begin{aligned}
        \frac{\partial G_1}{\partial \phi^r} &\leq-1,~~\frac{\partial G_1}{\partial \theta^r} \approx 0\\
        \frac{\partial G_2}{\partial \phi^r} & \approx 0,~~~~\frac{\partial G_2}{\partial \theta^r} \geq 1.
    \end{aligned}
    \label{g1g2_pds}
\end{equation}
By substituting (\ref{g1g2_pds}) into (\ref{zmp_comp2}), we obtain the following constraints on the variation of ZMP coordinates:
\begin{equation}
    \begin{aligned}
        \frac{\partial x^r_{\text{zmp}}}{\partial {\phi^r}} &\geq \frac{M_z}{M}>0,~~\frac{\partial x^r_{\text{zmp}}}{\partial {\theta^r}} \approx 0\\
        \frac{\partial y^r_{\text{zmp}}}{\partial {\phi^r}} &\approx 0,~~~~~~~~~~~\frac{\partial y^r_{\text{zmp}}}{\partial {\theta^r}} \leq -\frac{M_z}{M}<0.
    \end{aligned}
    \label{zmp_mono}
\end{equation}
\end{proof}

With Proposition \ref{prop2} introduced and graphically illustrated in Figure \ref{move_traj}, we claim that as long as the rectangle derived above is contained within the convex support polygon, the quasi-static mobile motion of the robot is guaranteed to be stable.

\subsection{Traction Optimization}
The results obtained thus far focused on the rollover stability constraint for the robot. However, relocating on 3-D terrain, which may be steep and loose also requires us to consider traction limits in order to ensure the robot does not experience traction loss and unstable sliding. Since mobile manipulators with only two actuated wheels or tracks are more prone to traction loss and resulting slippage, this section will be focused on this type of robot with common symmetrical drive layout, including both wheeled and tracked robots. 

To conduct a force analysis for the robot's two wheels/tracks, we denote the total normal force experienced by the robot as $\boldsymbol{f}_n$, the static friction coefficient as $\mu$, and the friction force generated by the robot's wheels/tracks to remain static on a hill as $\boldsymbol{f}_{f}^0$, where $\|\boldsymbol{f}_{f}^0\|_2\leq \|\mu \boldsymbol{f}_n\|_2$. In order to avoid unplanned sliding as the robot accelerates, the minimum available friction force $f_f^{avail}$ along the heading direction of the mobile base should be optimally distributed between the two wheels/tracks. 

To avoid sliding, the following inequality has to hold: 
\begin{equation}
\|\boldsymbol{f}_f^0+\boldsymbol{f}_f^a\|_2\leq \|\mu\boldsymbol{f}_n\|_2,
\label{acc_bound_lax}
\end{equation}
where $\boldsymbol{f}_f^a$ is the friction force additional to $\boldsymbol{f}_f^0$ that results in base acceleration. Applying a more restrictive bound, we get
\begin{equation}
\begin{aligned}
\|\boldsymbol{f}_f^0\|_2+\|\boldsymbol{f}_f^a\|_2 &\leq \|\mu \boldsymbol{f}_n\|_2\\
\|\boldsymbol{f}_f^a\|_2 &\leq \|\mu \boldsymbol{f}_n\|_2-\|\boldsymbol{f}_f^0\|_2\\
f_f^{avail}&=\|\mu \boldsymbol{f}_n\|_2-\|\boldsymbol{f}_f^0\|_2.
\end{aligned}
\label{acc_bound}
\end{equation}
Note that (\ref{acc_bound}) gives the magnitude of traction force $\boldsymbol{f}_f^a$ a more restrictive bound than (\ref{acc_bound_lax}) and hence it is named ``minimum available traction force". It is a conservative quantity that is ideal for safety assurance. 

Let the distance between the left wheel/track and the ZMP be denoted with $l$, and the distance between the right wheel/track and the ZMP be denoted with $r$. We also denote the magnitude of $\boldsymbol{f}_{n}$ and $\boldsymbol{f}_{f}^0$ with scalars $f_n$ and $f_f^0$, respectively. Then, the normal force distribution on each of the left and right wheel/track can be respectively expressed as:
\begin{equation}
\begin{aligned}
	\|\mu\boldsymbol{f}_{n}\|_2^L = \mu f_n\frac{r}{l+r},~~~
	\|\mu\boldsymbol{f}_{n}\|_2^R = \mu f_n\frac{l}{l+r}.
\end{aligned}
\label{fric_max}
\end{equation}
The friction forces on the two sides that counter sliding due to gravity can be written as 
\begin{equation}
\begin{aligned}
	\|\boldsymbol{f}_f^0\|_2^L = f_f^0\frac{r}{l+r},~~~
	\|\boldsymbol{f}_f^0\|_2^R = f_f^0\frac{l}{l+r}.
\end{aligned}
\label{fric_act}
\end{equation}
The amount of minimum available traction force on each side can then be found using (\ref{acc_bound}) by subtracting (\ref{fric_act}) form (\ref{fric_max}):
\begin{equation}
\begin{aligned}
	f_{f}^{avail,L} \!=\! (\mu f_n-f_f^0)\frac{r}{l+r},~
	f_{f}^{avail,R} \!=\! (\mu f_n-f_f^0)\frac{l}{l+r}.
\end{aligned}
\label{fric_avail}
\end{equation}

Then, traction optimization can be achieved by maximizing the minimum available traction forces on both the left and right wheels/tracks using following formulation:
\begin{equation}
\max_{p_{zmp}} \min(f_{a}^{avail,L}(\boldsymbol{p}_{zmp}),f_{a}^{avail,R}(\boldsymbol{p}_{zmp})).
\label{trac_opt}
\end{equation}
In order to achieve (\ref{trac_opt}), $l=r$ allows available friction forces in (\ref{fric_avail}) to be equal on both wheels. 

It is therefore claimed that vehicle traction is optimized when ZMP lies along the longitudinal center line of $Conv(S)$. However, constraining the ZMP to the center line eliminates the flexibility of robot reconfiguration and would drastically reduce the robot's ability to traverse through terrain. A varying factor that allows the ZMP to deviate from the support polygon's center line on milder terrain will be implemented in the planning phase. 

A comparison between a robot's path generated with and without traction optimization is shown in Figure \ref{fric_compare}. With traction optimization, the robot's path in Figure \ref{fric_opt} is smoother when compared to that in Figure \ref{fric_nopt}. This means that with traction optimization, the robot will tend to descend/ascend the steep sections along the direction of steepest descent/ascent in order to evenly distribute available traction forces between the two driving wheels/tracks to avoid sliding.

\begin{figure}[h!]
\centering
\captionsetup[subfigure]{width=0.9\textwidth,justification=raggedright}
\begin{subfigure}{.25\textwidth}
\centering
\includegraphics[angle=0,origin=c,trim = 45mm 65mm 40mm 95mm, clip, width=4cm]{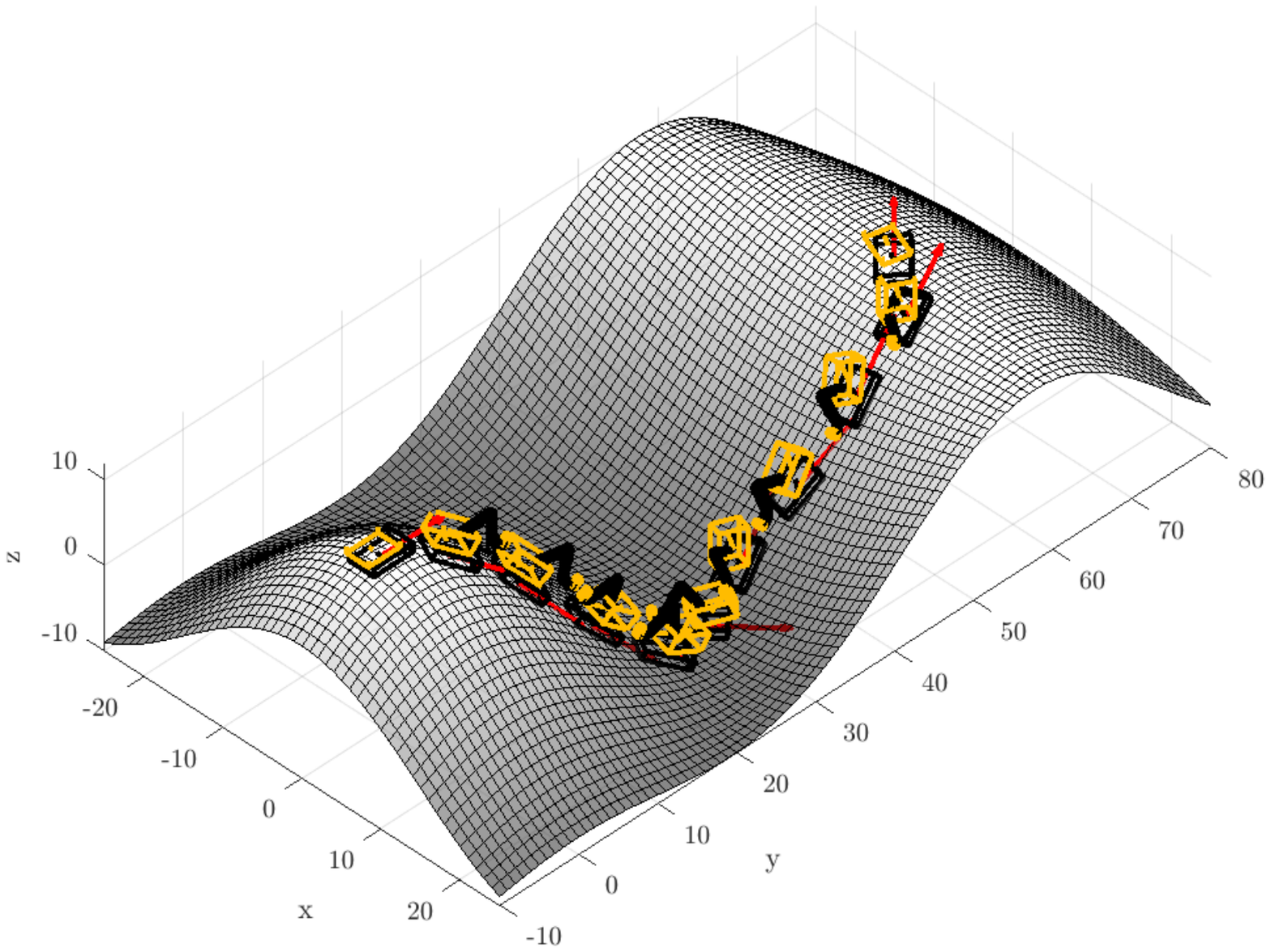}
\caption{Path generated with optimized traction.}
\label{fric_opt}
\end{subfigure}%
\begin{subfigure}{.25\textwidth}
\centering
\includegraphics[angle=0,origin=c,trim = 45mm 65mm 40mm 95mm, clip, width=4cm]{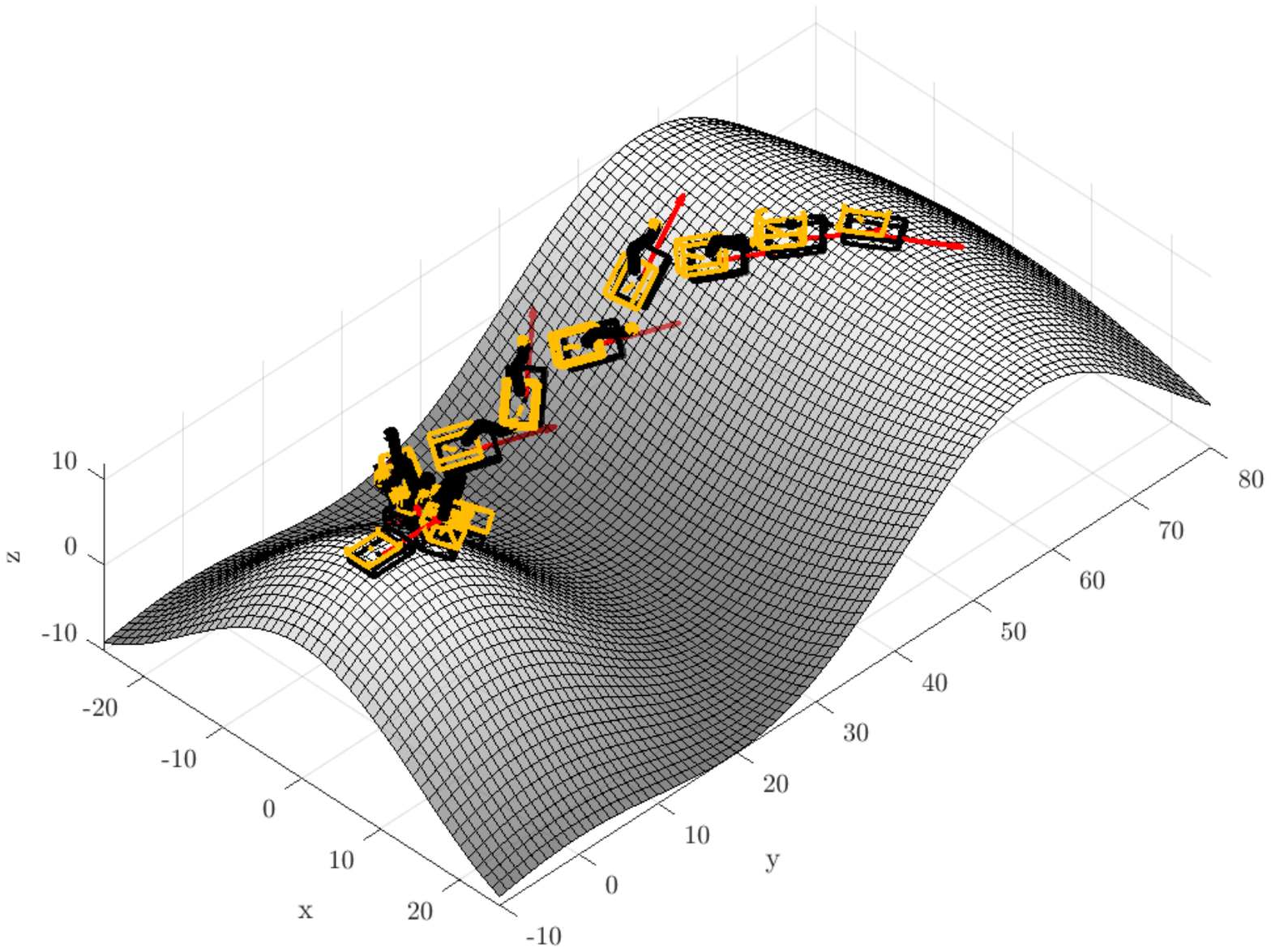}
\caption{Path generated without optimized traction.}
\label{fric_nopt}
\end{subfigure}%
\caption{Two sets of path generated to compare the effect of traction optimization.}
\label{fric_compare}
\end{figure}
\label{trac_opti}

\section{Simulation Results}
In order to showcase the importance of ZMP constraints in mobile manipulator trajectory planning, and the effectiveness of our formulations, the trajectory planning framework is applied to a feller buncher machine. In timber harvesting scenarios, the purpose of a feller buncher is to cut trees, grip them, drop them off, and relocate to another spot to continue performing these tasks. The machine is operated throughout the year, in greatly varying terrain conditions and often on slopes of different grades. In this section, the local manipulation capability of the framework (per Section \ref{manipulation}) will be demonstrated by an example where the machine tries to relocate a heavy tree on a slope. Then, the reconfigurable relocation capability (per Section \ref{relocation}) will be demonstrated by commanding the machine to relocate on a test terrain without carrying a tree.

\subsection{Kinematics Model of a Feller Buncher}
The feller buncher machine is modeled after the Tigercat 855E, and can be described with the schematic diagram in Figure \ref{schematic1}. The machine is considered to be holding onto a tree in the manipulation planning example, and the machine and the tree's geometric and mass parameters are summarized in Table \ref{params}. The machine's support polygon is a $3.23~\text{m
}\!\!\!\times5~\text{m}$ rectangle.

\begin{table}[h!]
\begin{center}
\begin{tabular}{|c|c|c|c|c|} 
\hline
&$a_i$&$\alpha_i$&$d_i$&$\theta_i$\\ [0.5ex] 
\hline
\text{1} &$0$&$0$&$d_1$&$q_1$\\ 
\hline
\text{2} &$0$&$q_2$&$0$&$0$\\
\hline
\text{3} &$0$&$q_3$&$d_3$&$0$\\
\hline
\text{4} &$0$&$q_4$&$d_4$&$0$\\
\hline
\text{5} &$0$&$0$&$d_5$&$q_5$\\
\hline
\end{tabular}
\end{center}
\caption{Denavit-Hartenberg parameters table of the feller buncher.}
\label{dhtable}
\end{table}

The optimal control results in this section are achieved by running the nonlinear optimal control solver GPOPS \cite{patt2014} under MATLAB environment on a Windows desktop with Intel Core i7-4770 3.40 GHz processor.

\subsection{Mapping from Simplified Model to Full Model}
Since the simplified manipulator model has $2$ arm DoFs while the full model has $5$ arm DoFs, to ensure that the ZMP location of the full model follows that of the simplified model, a relationship between the DoFs of the simplified model and those of the full model of the machine needs to be established. To satisfy Assumption 1 b), for the specific machine geometry in this example, we choose to constrain the machine's joint angles such that 
\begin{equation}
[q_3, q_4,q_5]^T = [-\pi-2q_2, \dfrac{\pi}{2}+q_2, \text{constant}]^T,
\label{angle1}
\end{equation}
and the corresponding relationship between joint accelerations follows:
\begin{equation}
[\ddot{q}_3,\ddot{q}_4,\ddot{q}_5]^T = [-2\ddot{q}_2, \ddot{q}_2,0]^T.
\label{angle2}
\end{equation}

Defining the lengths of link 2 and link 4 as $l_2$ and $l_4$, respectively, the relationship between $d$ and $q_2$ can be written as:
\begin{equation}
q_2=\sin^{-1}(\frac{-d}{2l_2}).
\label{d}
\end{equation}
Then, taking the first and second derivative of (\ref{d}) with respect to time, we can obtain:
\begin{equation}
\begin{aligned}
\dot{q}_2 &\!=\! \dfrac{-\dot{d}}{2l_2\sqrt{1-\frac{d^2}{4l_2^2}}}\\
\ddot{q}_2 & = -\dfrac{\ddot{d}}{2l_2\sqrt{1-\frac{d^2}{4l_2^2}}}-\dfrac{d\dot{d}^2}{8l_2^3\sqrt{1-\frac{d^2}{4l_2^2}}^3}.
\end{aligned}
\label{map2}
\end{equation}
Therefore, using the mapping provided in (\ref{angle1}) and (\ref{angle2}), and the relation between joint angles and accelerations as given in (\ref{d}) and (\ref{map2}), the ZMP location of the full kinematics model can be written as $
\boldsymbol{p}_{\text{zmp}}(\tilde{\boldsymbol{x}},\tilde{\boldsymbol{u}})$.

\begin{table*}[h!]
\begin{center}
\begin{tabular}{|c|c|c|c|c|c|c|c|} 
\hline
$\text{Link Number}$&$0$&$1$&$2$&$3$&$4$&$5$&$\text{Tree}$\\ [0.5ex]
\hline
$\text{Length, } l_i \text{ (m)}$&$1.60$&$0.96$&$3.27$&$3.27$&$0.458$&$0.677$&$8$\\
\hline
$\text{Mass, } m_i \text{ (kg)}$&$13000$&$5000$&$2000$&$1000$&$50$&$2600$&$4000$\\
\hline
\end{tabular}
\end{center}
\caption{Mobile manipulator parameters for the test case.}
\label{params}
\end{table*}

\subsection{Full Kinematics vs. Simplified Formulation vs. Phase Plane Method}
In the first test case, we consider the feller buncher to be situated statically on a slope that results in a 30-degree vehicle roll to the left. The machine starts from this initial condition and the goal is to have the cabin yaw 180 degrees towards left. The initial joint angles and velocities are:
\begin{equation}
\begin{aligned}
\boldsymbol{q}(t_0) &= [0, -\pi/6, -2\pi/3, \pi/6, -\pi/2]^T \text{ rad}\\
\dot{\boldsymbol{q}}(t_0) &= [0,0,0,0,0]^T \text{ rad/s},
\end{aligned}
\label{init}
\end{equation}
and the desired final joint angles and velocities are:
\begin{equation}
\begin{aligned}
\boldsymbol{q}(t_f) &= [\pi, -\pi/6, -2\pi/3, \pi/6, -\pi/2]^T \text{ rad}\\
\dot{\boldsymbol{q}}(t_f) &= [0,0,0,0,0]^T \text{ rad/s}.
\end{aligned}
\label{final}
\end{equation}
The joint rate and acceleration constraints are:
\begin{align*}
-\frac{\pi}{4} &\leq \dot{q}_i \leq \frac{\pi}{4} \text{ rad/s}~~~~~\forall~ i\\
-\frac{\pi}{2} &\leq \ddot{q}_i \leq \frac{\pi}{2} \text{ rad/s}^2~~~~\forall~ i.
\end{align*}

\begin{figure}[h!]
\centering
\includegraphics[angle=0,origin=c,trim = 44mm 85mm 45mm 91mm, clip, width=6cm]{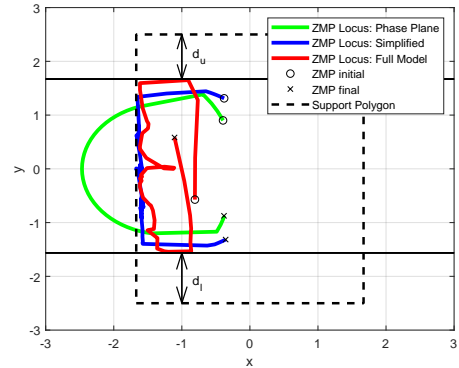}
\caption{ZMP loci of constrained trajectory planning vs. phase plane method.}
\label{planvbang}
\end{figure}

\begin{figure*}[h!]
\centering
\begin{subfigure}{.3\textwidth}
\centering
\includegraphics[angle=0,origin=c,trim = 40mm 70mm 45mm 73mm, clip, width=5cm]{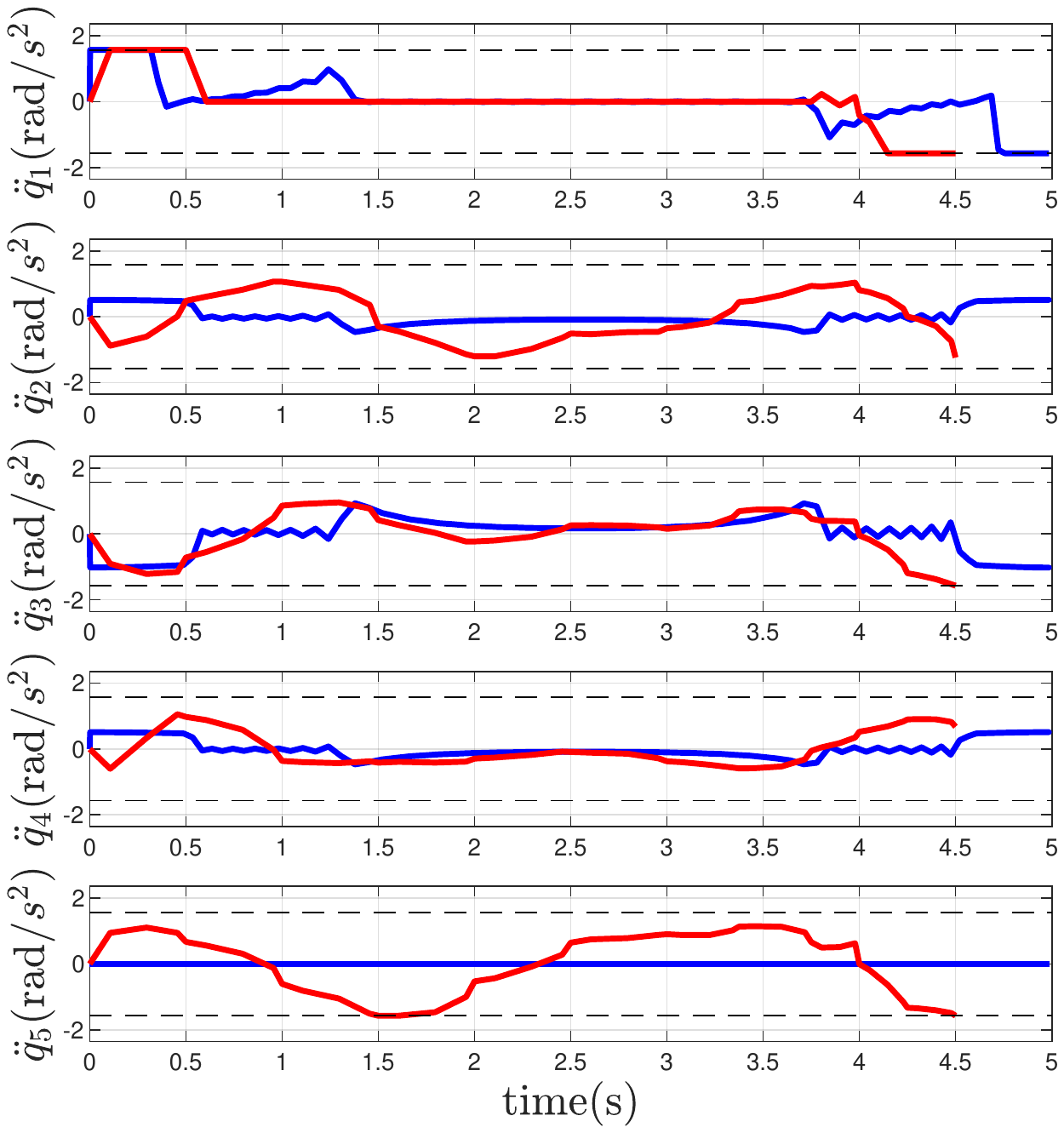}
\caption{Joint accelerations.}
\label{joint_acc}
\end{subfigure}%
\begin{subfigure}{.3\textwidth}
\centering
\includegraphics[angle=0,origin=c,trim = 40mm 70mm 45mm 73mm, clip, width=5cm]{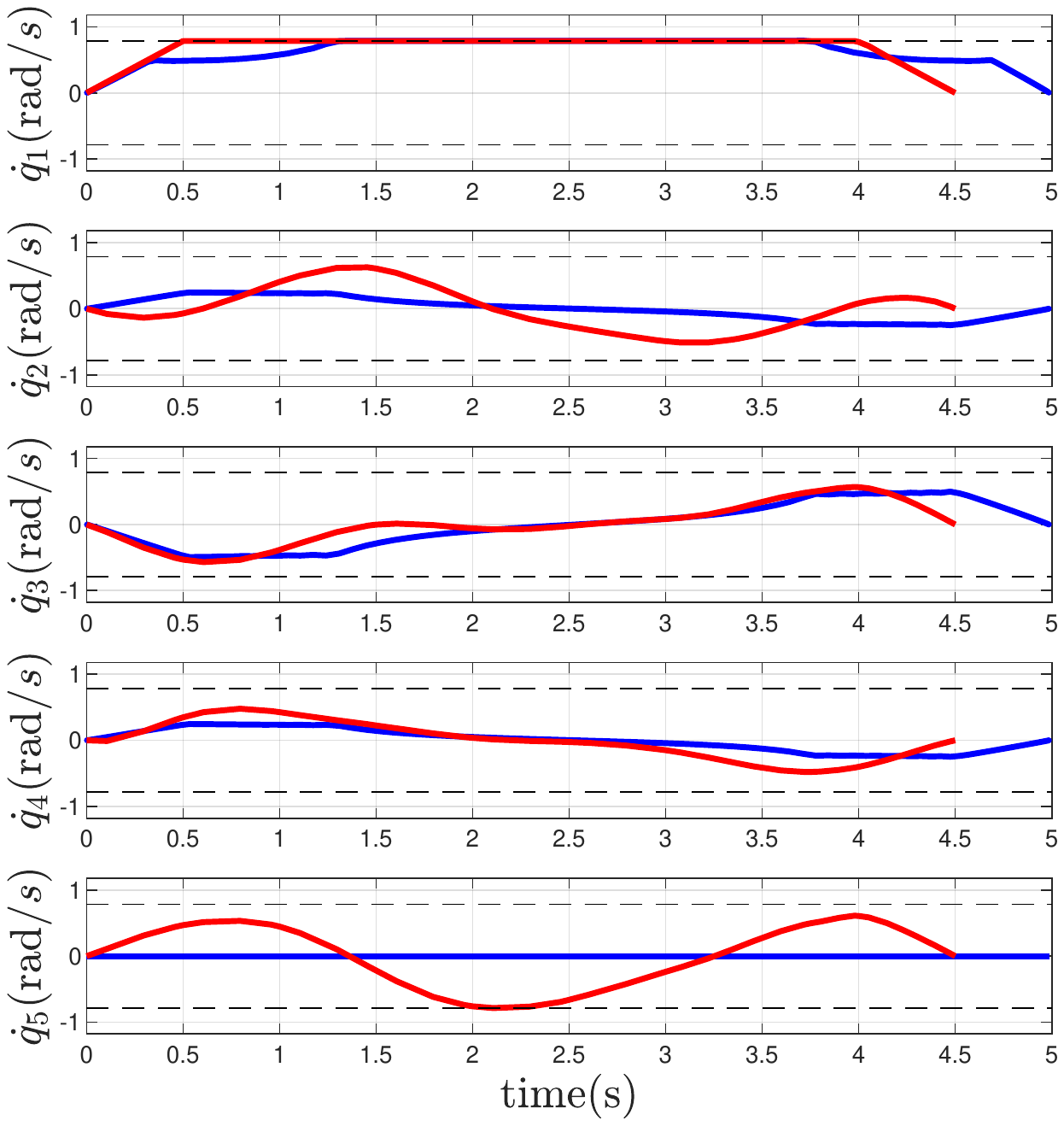}
\caption{Joint velocities.}
\label{joint_vel}
\end{subfigure}%
\begin{subfigure}{.3\textwidth}
\centering
\includegraphics[angle=0,origin=c,trim = 40mm 70mm 45mm 73mm, clip, width=5cm]{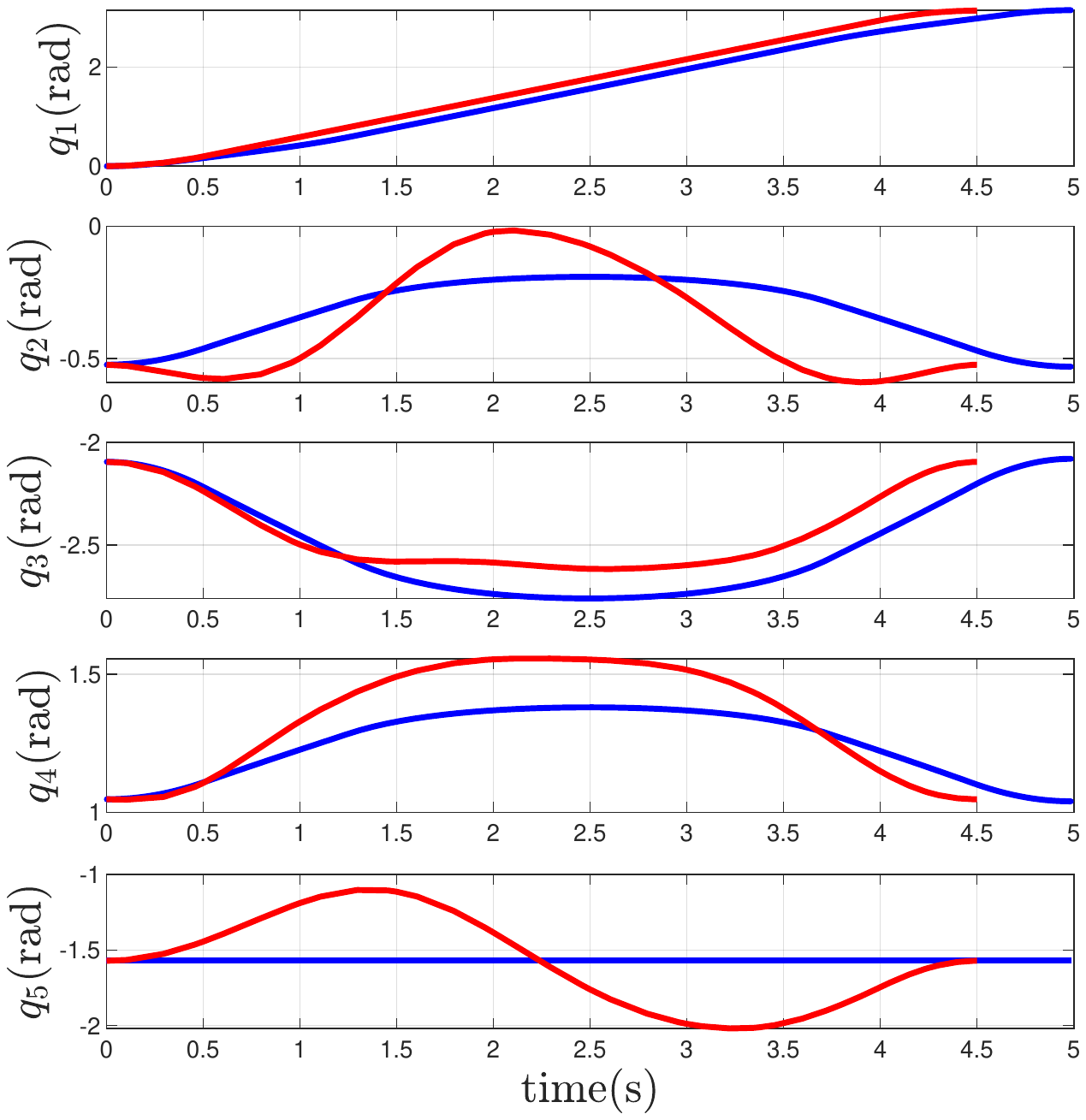}
\caption{Joint angles.}
\label{joint_angle}
\end{subfigure}
\caption{Joint accelerations (a), velocities (b), and angles (c) plots. Blue line indicates results from the simplified model, red line indicates results from the full model, black dashed lines indicate constraints. Note that all blue values of $q_5$ are zero due to this DoF being reduced in the simplified model.}
\label{acc_vel_angle}
\end{figure*}

The ZMP loci of motions generated using the full kinematics optimal trajectory planning formulation (\ref{opt_formulation}), the simplified formulation (\ref{opt_reformulation}), and the classic phase plane method that considers joint angle, rate, and acceleration limits without stability constraints \cite{shin1985} are presented in Figure \ref{planvbang}. These results show that if the feller buncher follows the time optimal trajectory prescribed by the phase plane method, the ZMP locus (green) of the machine will travel outside of the support polygon. In this case, the machine is at risk of rolling over. However, the motion generated through solving (\ref{opt_formulation}) and (\ref{opt_reformulation}) would result in safe (red and blue) ZMP loci. It is noted that although the starting and ending states of the machine for the three trajectory planning methods are the same, the ZMP loci do not start and end at the same points due to different initial accelerations. The unsafe motion generated by the phase plane method takes 4.5 seconds to complete; the safe motion generated by solving the OCP (\ref{opt_formulation}) also takes 4.5 seconds to complete; and the safe motion generated by (\ref{opt_reformulation}) takes 5.0 seconds to complete.

\begin{figure}[h!]
\centering
\includegraphics[angle=0,origin=c,trim = 45mm 86mm 46mm 92mm, clip, width=6cm]{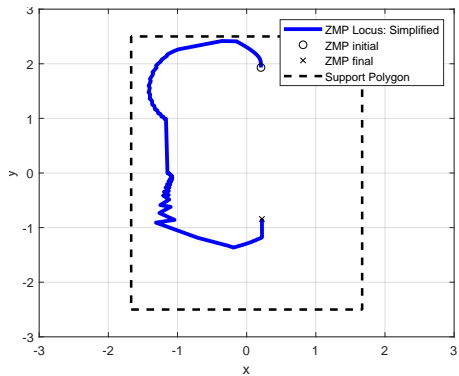}
\caption{ZMP locus throughout mobile manipulation task.}
\label{move}
\end{figure}

For the solutions of (\ref{opt_formulation}) and (\ref{opt_reformulation}), the planned joint accelerations, rates, and angles are displayed in Figures \ref{joint_acc}, \ref{joint_vel}, and \ref{joint_angle}, respectively. It is noted that all initial and final conditions, and state and input constraints are satisfied for both the full and simplified formulations. However, for this example, the computation time to solve (\ref{opt_formulation}) is more than 5 hours, while the computation time to solve (\ref{opt_reformulation}) is 1.34 seconds.

\begin{figure}[h!]
\centering
\includegraphics[angle=0,origin=c,trim = 44mm 90mm 44mm 90mm, clip, width=8cm]{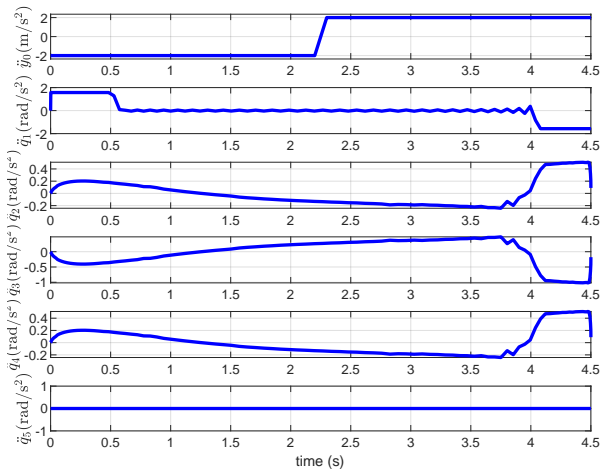}
\caption{Base and joint acceleration of the mobile manipulator.}
\label{move_base}
\end{figure}

\subsection{Simplified Kinematics Model Manipulation-coordinated Base Acceleration Planning}
In the second test case, we consider the feller buncher to be situated on a slope that results in a 15-degree vehicle pitch towards the front and a 15-degree roll to the left. The machine's manipulator arm has to complete the same motion as in the first test case while the mobile base begins at rest, backs up, and goes back to rest during the manipulation. This mobile manipulation task corresponds to a typical occurrence during timber harvesting where the feller buncher has cut a tree on a slope but needs to back up in order to find space to place the tree down. With all joint constraints the same, the mobile base's acceleration has to satisfy $\ddot{y}_0\in [-2,2]~\text{m/s}^2$, and its velocity has to satisfy $\dot{y}_0\in [-10,10]~\text{m/s}$.

It is after obtaining a solution for the optimal trajectory of the manipulator, the safe base acceleration bound is found using equation (\ref{acc_bound}). Figure \ref{move} shows that throughout the task, the ZMP locus never exceeds the edges of the support polygon. Time history of the mobile manipulator's motion is shown in Figure \ref{move_base}. The top plot in Figure \ref{move_base} shows that the mobile base accelerated and decelerated at its maximum magnitude allowed by the corresponding state constraints and the acceleration bounds to achieve time-optimal arrival at the destination. The manipulation motion takes 4.5 seconds to complete and the computation time for arm manipulation is 1.29 seconds, while the time for calculating the base acceleration bound and base trajectory planning is negligible since analytical solutions (\ref{acc_bound}) exist. 

\begin{table}[h!]
\begin{center}
\begin{tabular}{|c|c|} 
\hline
$\text{Base Attitude}$&$\text{Success Rate}$\\ [0.5ex]
\hline
$\text{0-degree Roll}$&$0.989$\\
\hline
$\text{15-degree Roll}$&$0.988$\\
\hline
$\text{20-degree Roll}$&$0.796$\\
\hline
$\text{30-degree Roll}$&$0.406$\\
\hline
\end{tabular}
\end{center}
\caption{Trajectory planning success rate of four base attitudes}
\label{suc_rate}
\end{table}

\subsection{Monte Carlo Simulations for Reconfiguration on Different Slopes}
To test the performance of the simplified model for different initial and final conditions, additional simulations were carried out for four different machine base attitudes of 0,15,20 and 30-degree roll, each with 3000 pairs of randomized initial and final cabin yaw angles from the set $[-2\pi,2\pi]$ with other initial and final joint angles the same as in (\ref{init}) and (\ref{final}). The trajectory planning success rate is presented in Table \ref{suc_rate}. The distribution of successful trajectory planning times of the four base attitudes is presented in Figure \ref{suc_time}. 

It can be observed from these results that the planning success rate decreases with increasing slope angle, and computation time increases with increasing slope angle. The decreasing trend in success rate is due to the more frequent appearance of infeasible initial and final conditions. The distribution in Figure \ref{suc_time} shows that the dimension reduction applied in the simplified model allows the computation time to be low enough for online guidance in most cases.

\begin{figure}[h!]
\centering
\includegraphics[angle=0,origin=c,trim = 45mm 85mm 45mm 85mm, clip, width=8cm]{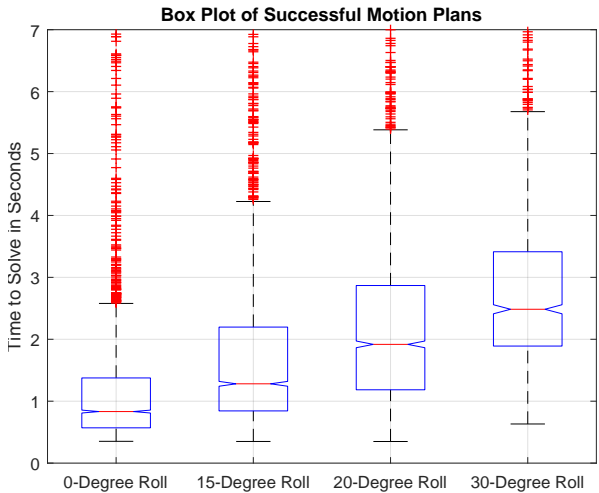}
\caption{Computation time for reconfiguration maneuver of four slope angles using simplified model.}
\label{suc_time}
\end{figure}

\subsection{Mobile Relocation Planning}
To test the performance of mobile relocation trajectory planning, a test terrain described by the sinusoidal function:
\begin{equation}
s_z=10\cos(\frac{\sqrt{{s_x}^2+{s_y}^2}}{10})
\label{surf_fun}
\end{equation}
is created. To formulate the OCP (\ref{opt_formulation}) of this example, the terrain surface function (\ref{surf_fun}) is integrated into the feller buncher's kinematics (\ref{kin_main}).

The machine itself, without the tree, is then commanded to start from the peak of a mountain ($s_x\!\!=\!0$, $s_y\!\!=\!0$, $s_z\!\!=\!10$) and to relocate to a point ($s_x\!\!=\!0$, $s_y\!\!=\!63$, $s_z\!\!\approx \!10$) on the surrounding ridge (see Figure \ref{rrt_path}). Denoting the base's linear velocity as $v$, the initial and final states for the machine's base and arm are:
\begin{equation*}
\begin{aligned}
[x_0,y_0,v,\bar\psi,\dot{\bar{\psi}}]^T(t_0)&=[0,0,0,0,0]^T\\
\boldsymbol{q}(t_0) &= [0, -\pi/6, -2\pi/3, \pi/6, -\pi/2]^T\\
\dot{\boldsymbol{q}}(t_0) &=[0,0,0,0,0]^T\\
[x_0,y_0,v,\bar\psi,\dot{\bar{\psi}}]^T(t_f)&=[0,63,0,\text{free},0]^T\\
\dot{\boldsymbol{q}}(t_f) &=[0,0,0,0,0]^T,
\end{aligned}
\end{equation*}
and we note that the final heading angle of the base and arm joint angles are left free. The constraints on the mobile base's linear and angular rates and accelerations are:
\begin{equation*}
\begin{aligned}
0\leq &v\leq 2.78~~\text{m}/\text{s}\\
-1\leq &u_a\leq 1~~\text{m}/\text{s}^2\\
-2\leq &\dot{\psi}\leq 2~~\text{rad}/\text{s}\\
-\frac{2}{3}\leq &u_{\psi}\leq \frac{2}{3}~~\text{rad}/\text{s}^2.
\end{aligned}
\end{equation*}

Algorithm \ref{TRRT} first generates a quasi-static path with guaranteed stability for continuous motion for the machine indicated by the blue crosses in Figure \ref{rrt_path}. The resulting path requires the machine to reconfigure its cabin angle $q_1$, as shown by Figures \ref{pre_change} and \ref{post_change}, to shift the ZMP so the machine does not rollover towards the front as it is descending the slope. The sampling point where this configuration change happens is highlighted in Figure \ref{rrt_path_sub} by a red circle.

\begin{figure}[h!]
\centering
\captionsetup[subfigure]{width=0.9\textwidth,justification=raggedright}
\begin{subfigure}{.25\textwidth}
\centering
\includegraphics[angle=0,origin=c,trim = 45mm 65mm 40mm 95mm, clip, width=4cm]{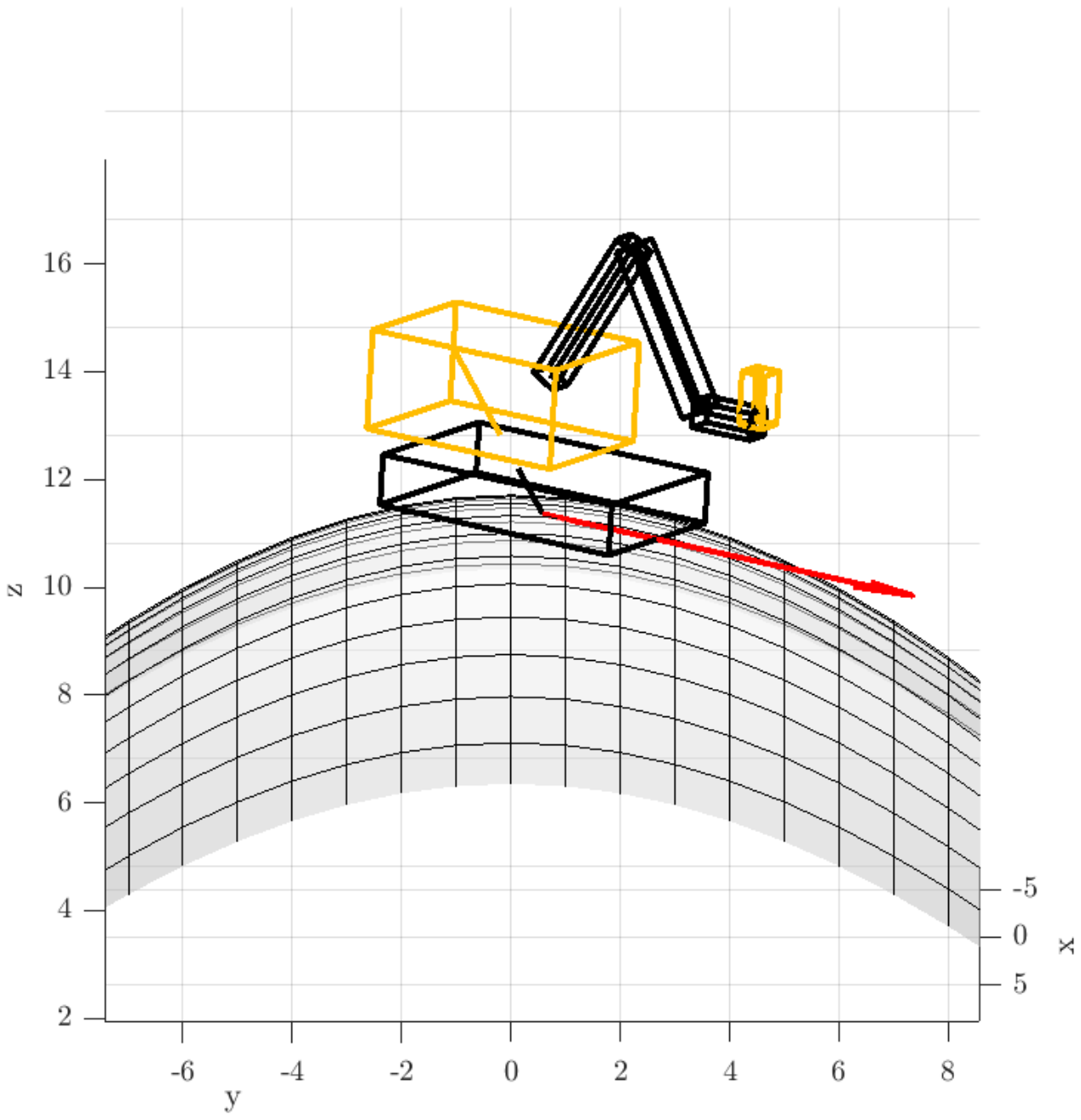}
\caption{Machine configuration before change.}
\label{pre_change}
\end{subfigure}%
\begin{subfigure}{.25\textwidth}
\centering
\includegraphics[angle=0,origin=c,trim = 45mm 65mm 40mm 95mm, clip, width=4cm]{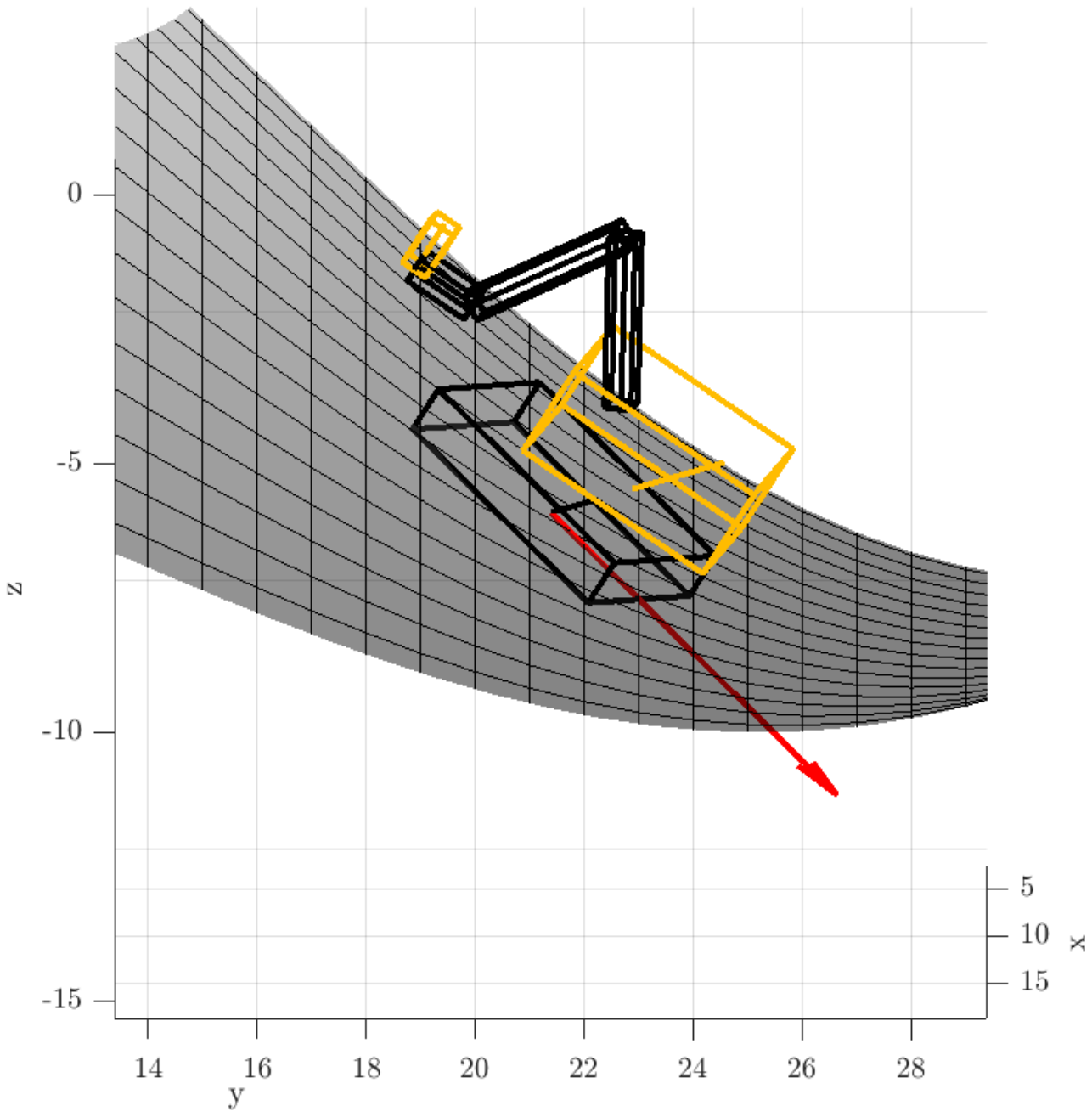}
\caption{Machine configuration after change.}
\label{post_change}
\end{subfigure}%
\vskip\baselineskip
\begin{subfigure}{.5\textwidth}
\centering
\includegraphics[angle=0,origin=c,trim = 25mm 75mm 25mm 75mm, clip, width=8cm]{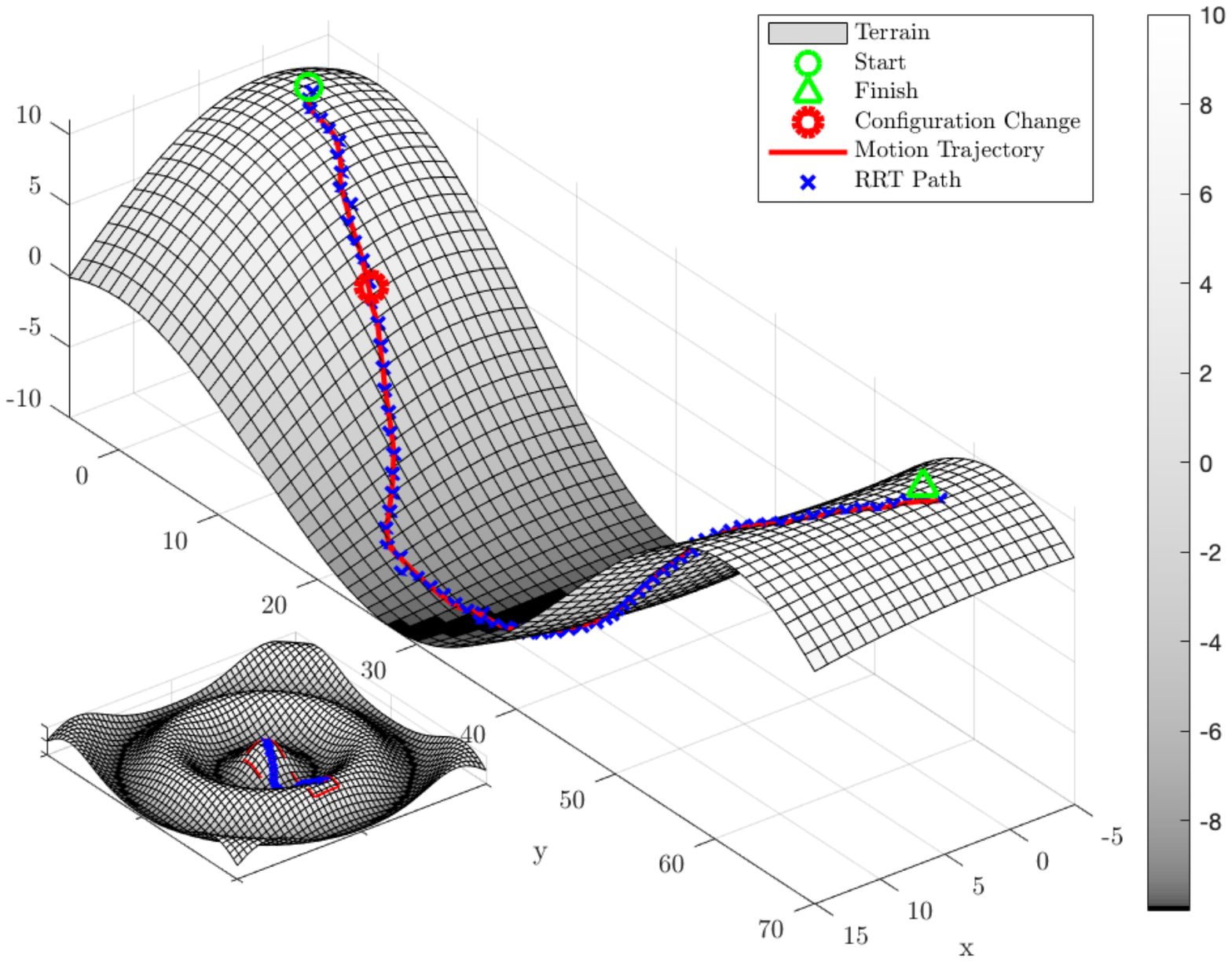}
\caption{Overview of the terrain, quasi-static path generated from the sampling-based planner (blue cross), and trajectory (red line).}
\label{rrt_path_sub}
\end{subfigure}
\caption{Planned path and point of reconfiguration of the feller buncher machine on a sinusoidal test terrain.}
\label{rrt_path}
\end{figure}


Then, based on the quasi-static path and configuration change generated by Algorithm \ref{TRRT}, an initial guess is given to GPOPS\cite{patt2014}, a nonlinear OCP solver, to generate linear and angular accelerations for the machine to allow for time-optimal relocation. The solver generates the trajectory of each segment with the initial and final location of a segment being two consecutive sampling points. To create an initial guess of each segment for the OCP solver, the machine is constrained to be static at each sampling point but is given small constant linear velocity and heading rate when it is between sampling points to allow dynamic stability constraint satisfaction. The resulting trajectories of all segments are then combined. The continuous path of the machine is represented by the red line in Figure \ref{rrt_path}, the motion trajectory is shown in Figure \ref{optimo}, and the ZMP trajectory is shown in Figure \ref{optimo2}. Note that the machine's velocity remains 0 in segment 14 due to arm reconfiguration. 
\begin{figure}[h!]
\centering
\includegraphics[angle=0,origin=c,trim = 5mm 72mm 6mm 75mm, clip, width=8.2cm]{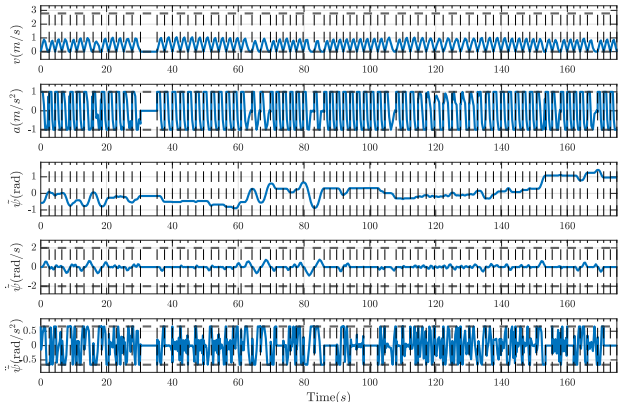}
\caption{Motion trajectory generated by iteratively solving OCP of each segment.}
\label{optimo}
\end{figure}

\begin{figure}[h!]
\centering
\includegraphics[angle=0,origin=c,trim = 10mm 100mm 7mm 100mm, clip, width=8.2cm]{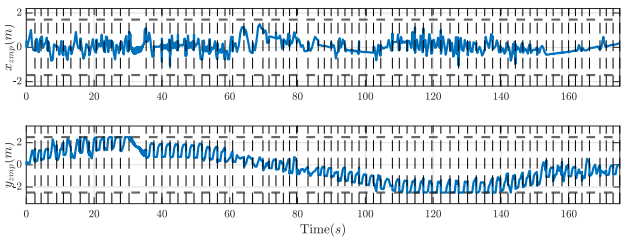}
\caption{ZMP trajectory generated by iteratively solving OCP of each segment.}
\label{optimo2}
\end{figure}

The result in Figure \ref{optimo} shows that all state and stability constraint variables remained within their bounds, indicated by black dashed lines, and the overall trajectory takes $\approx 180$ seconds to complete. Each motion segment separated by vertical dashed lines took $\approx0.3$ seconds to calculate, and the entire trajectory took 18.30 seconds to plan. This shows that the framework proposed in this paper has the potential to be implemented to achieve online dynamically stable mobile manipulation planning. 

The resulting ZMP trajectory is shown in Figure \ref{optimo2} with black dashed horizontal lines indicating the boundaries of $Conv(S)$. During sampling-based path planning, the deviation of $x_{zmp}$ from the centerline of $Conv(S)$ is constrained to change linearly with respect to the slope angle from $50\%$ of the total withth of $Conv(S)$ on flat ground to $10\%$ on $45^o$ slopes. The machine's $x_{zmp}$ is shown to have stayed close to $0$, i.e., the centerline of the support polygon, during ascent and descent as a result of the traction-optimized path as mentioned in Section \ref{trac_opti}. The location of $y_{zmp}$ is shown to generally move forward as the machine is moving down the slope and then move backward as the machine is climbing the slope.

However, due to the quasi-static inter-segment constraint, the overall motion of the machine is not satisfactory: the velocities and accelerations are highly oscillatory (see the subfigures for $a$ and $\ddot{\psi}$ ) and can cause less than ideal performance, along with excessive hardware wear. To remedy this, a receding horizon OCP solution scheme is implemented. 

For the result shown in Figure \ref{optimo}, more than one trajectory segment can be treated as a whole and thus used as the initial guess of a new OCP segment. The length of time interval in the new combined segment will be called ``horizon length". For this specific example, the horizon length is chosen to be $\approx 4$ seconds. When GPOPS cannot converge to a solution within a user specified time threshold, the solver is terminated and result computed for the original segment is kept for the machine's execution. This way, the motion trajectory can be optimized where possible without forcing the machine to stop and wait for a result. 

The receding horizon scheme enhanced motion trajectory is shown in Figure \ref{optimo_en}, with the corresponding ZMP trajectory shown in Figure \ref{optimo_en2}. It can be observed that, compared to Figure \ref{optimo}, the motion trajectory has become significantly smoother. The relocation completion time has also been shortened to $\approx 105$ seconds due to the enhancement. Both the state and dynamic stability constraints are shown to be respected. The joint motion trajectories of the arm are shown in Figure \ref{optijo}. The joint variables vary only in one segment which corresponds to 0 velocity of the machine's base, when the robot reconfigures itself. The successful receding horizon re-planning segments took a total 8.78 seconds to complete, while the unsuccessful segments took a total of 14 seconds with the computational time threshold set of 0.5 seconds.


\begin{figure}[h!]
\centering
\includegraphics[angle=0,origin=c,trim = 5mm 72mm 6mm 75mm, clip, width=8.2cm]{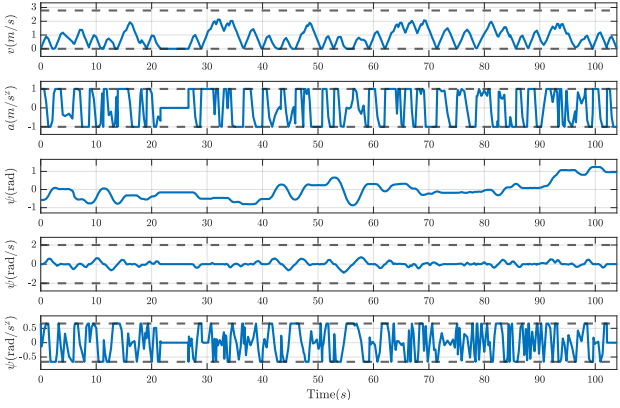}
\caption{Motion trajectory generated through iteratively solving OCP with existing motion trajectory and a horizon length of 20 time steps.}
\label{optimo_en}
\end{figure}

\begin{figure}[h!]
\centering
\includegraphics[angle=0,origin=c,trim = 10mm 100mm 7mm 100mm, clip, width=8.2cm]{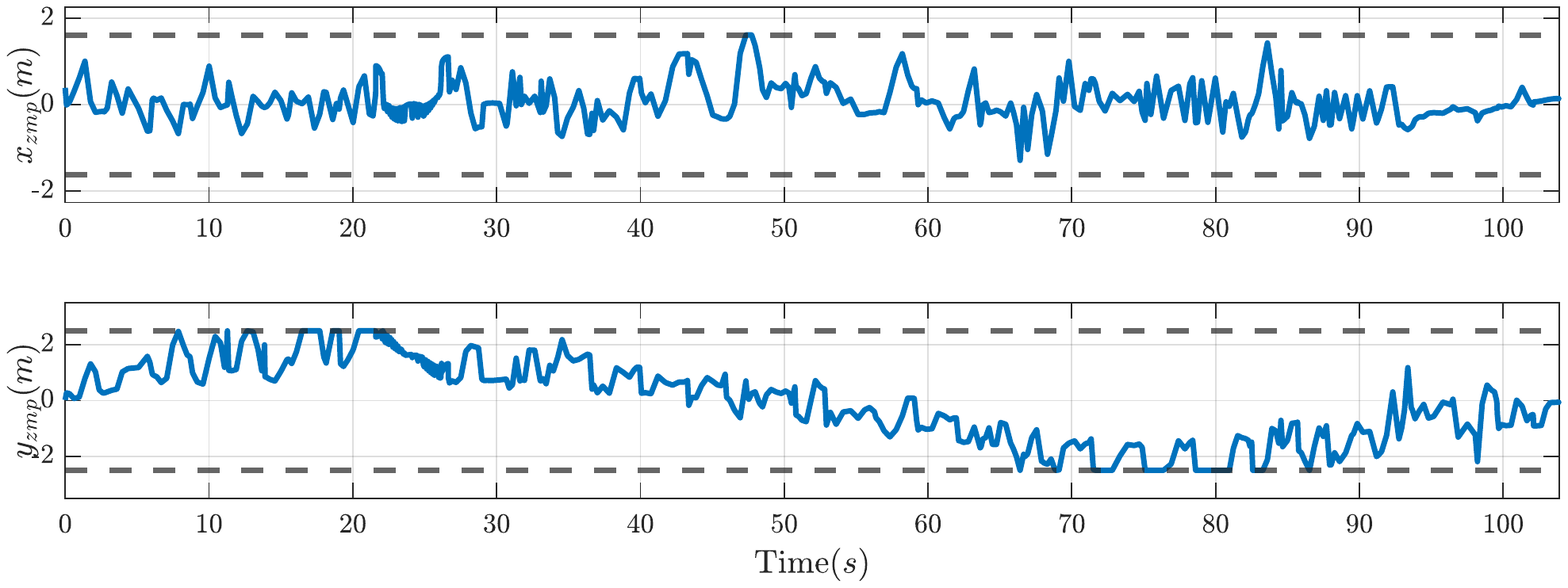}
\caption{ZMP trajectory generated through iteratively solving OCP with existing motion trajectory and a horizon length of 20 time steps.}
\label{optimo_en2}
\end{figure}

\begin{figure}[h!]
\centering
\includegraphics[angle=0,origin=c,trim = 12mm 74mm 4mm 74mm, clip, width=8.5cm]{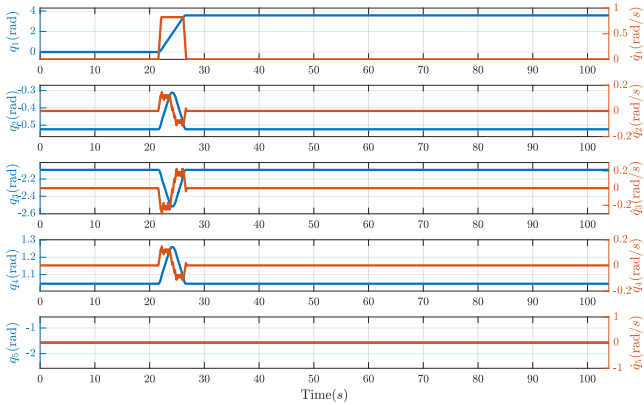}
\caption{Joint motion trajectories of the machine with joint angles shown in red and joint velocities shown in blue.}
\label{optijo}
\end{figure}

%

\section{Conclusions and Future Work}
To conclude, this paper presented a framework that allows online optimal mobile manipulation planning to allow for online guidance for a robot to work safely and efficiently on rough terrain. This is achieved through the formulation of a trajectory planning problem as an optimal control problem with an additional dynamic stability constraint. To reduce the problem dimension, the mobile manipulation task is first divided into two separate stages: the manipulation stage and the relocation stage. The online trajectory planning of the manipulation stage is achieved through dimension reduction with the help of the known inverse kinematics mapping of the robot. Then, to solve the relocation problem, a sampling-based path planning algorithm that optimizes slope traction availability and generates results guaranteed to meet nonlinear dynamic stability constraints when continuously executed is implemented. The robot path and reconfiguration command generated by the sampling-based algorithm are then taken to warm start a nonlinear OCP solver. The resulting trajectory is further optimized and smoothed with the receding horizon OCP re-planning scheme.

The framework presented in this paper is the first attempt at a task-based online optimal planning algorithm for a mobile manipulator that accommodates the nonlinear dynamic stability constraint. Some improvements can still be made to the framework to be implemented in a wider variety of scenarios. For example, the terrain map used in this paper only assumes a rough knowledge of the terrain features. However, as the robot traverses through the terrain, more detailed terrain features can be captured by its on-board sensors. With a more detailed local terrain map, a re-planning method or a stability-guaranteed control law can be developed to help the robot deal with local terrain complexities. 

\begin{appendix}

\begin{proof}[Proof of Theorem \ref{t1}]
During the execution of a quasi-statically planned path that is made of straight line segments, the mobile robot's motion can be decomposed into two motion primitives: forward and turning. Differentiating static variables and dynamic variables at the same location using superscript $\cdot^{\text{static}}$ and $\cdot^{\text{dynamic}}$, the distance between $\boldsymbol{p}_{zmp}^{\text{dynamic}}$ and $\boldsymbol{p}_{zmp}^{\text{static}}$ can be characterized using the 2-norm: $||\boldsymbol{p}_{zmp}^{\text{dynamic}}-\boldsymbol{p}_{zmp}^{\text{static}}||_2$. Here $\boldsymbol{p}_{zmp}^{\text{dynamic}}$ and $\boldsymbol{p}_{zmp}^{\text{static}}$ can both be found by solving (\ref{zmp_origin}).

Under the assumption that straight line segments are small so that the robot's attitude change during forward motion is negligible,we have $\ddot{x}_i^{\text{dynamic}}=\ddot{x}_i^{\text{static}}=0$, and $\ddot{z}_i^{\text{dynamic}}=\ddot{z}_i^{\text{static}}=0$ $\forall i\in\{0,\cdots,n\}$. Therefore, $x_{zmp}^{\text{dynamic}}=x_{zmp}^{\text{static}}$. As a result, we get the following:
\begin{equation}
\begin{aligned}
\left|\left|\boldsymbol{p}_{zmp}^{\text{dynamic}}-\boldsymbol{p}_{zmp}^{\text{static}}\right|\right|_2 &=\left|y_{zmp}^{\text{dynamic}}-y_{zmp}^{\text{static}}\right|\\
&=\left|\frac{\sum_im_i \ddot{y}_i^{\text{dynamic}}z_i}{\sum_im_ig_z}\right|\\
&\leq \left|\frac{\sum_im_i z_i}{\sum_im_ig_z}\right|\left|\ddot{y}^{\text{dynamic}}\right|\\
&=\left|\frac{\sum_im_i z_i}{\sum_im_ig_z}\right|\left|u_a\right|
\end{aligned}
\label{forward_bound}
\end{equation}

Derivation in (\ref{forward_bound}) shows that, the distance between $\boldsymbol{p}_{zmp}^{\text{static}}$ and $\boldsymbol{p}_{zmp}^{\text{dynamic}}$ during forward motion is proportionally bounded by the magnitude of input $u_a$.

For turning motion, every link on the robot goes through a rotation along the $z_0$-axis that can be described by the following rotation matrix:
\begin{equation}
R_{yaw}=\begin{bmatrix}
\cos{\psi}&-\sin{\psi}&0\\
\sin{\psi}&\cos{\psi}&0\\
0&0&1
\end{bmatrix}.
\end{equation}
With the mobile base fixed during the turn, the linear acceleration of each link's center of mass can be written as:
\begin{equation}
\!\begin{bmatrix}
\ddot{x}_i\\
\ddot{y}_i\\
\ddot{z}_i
\end{bmatrix}
\!\!\!=\underbrace{\!\!\!
\begin{bmatrix}
-\cos{\psi}\dot{\psi}^2-\sin{\psi}u_{\psi}&\sin{\psi}\dot{\psi}^2-\cos{\psi}u_{\psi}&0\\
-\sin{\psi}\dot{\psi}^2+\cos{\psi}u_{\psi}&-\cos{\psi}\dot{\psi}^2-\sin{\psi}u_{\psi}&0\\
0&0&0
\end{bmatrix}\!\!\!}_{\ddot{R}_{yaw}}
\begin{bmatrix}
x_i\\
y_i\\
z_i
\end{bmatrix}\!\!\!.
\end{equation}
Then, the distance can be written as:
\begin{equation}
\begin{aligned}
&\left|\left|\boldsymbol{p}_{zmp}^{\text{dynamic}}\!\!-\!\!\boldsymbol{p}_{zmp}^{\text{static}}\right|\right|_2\\
&=
\left|\left|\left[x_{zmp}^{\text{dynamic}}-x_{zmp}^{\text{static}},y_{zmp}^{\text{dynamic}}-y_{zmp}^{\text{static}}\right]^T\right|\right|_2\\
&=\left|\left|\left[\frac{\sum_i m_i\ddot{x}_i^{\text{dynamic}}\!z_i}{\sum_im_ig_z},\frac{\sum_im_i\ddot{y}_i^{\text{dynamic}}z_i}{\sum_im_ig_z} \right]^T\right|\right|_2\\
&\leq \left|\frac{\sum_i m_i\ddot{x}_i^{\text{dynamic}}\!z_i}{Mg_z}\right|+\left|\frac{\sum_i m_i\ddot{y}_i^{\text{dynamic}}\!z_i}{Mg_z}\right|\\
&=\left|\left[\frac{m_0z_0}{Mg_z},\cdots,\frac{m_nz_n}{Mg_z}\right]\left[\ddot{x}_0^{\text{dynamic}},\cdots,\ddot{x}_n^{\text{dynamic}} \right]^T \right|\\
&~~~~+\left|\left[\frac{m_0z_0}{Mg_z},\cdots,\frac{m_nz_n}{Mg_z}\right]\left[\ddot{y}_0^{\text{dynamic}},\cdots,\ddot{y}_n^{\text{dynamic}} \right]^T \right|\\
&\leq \left|\left|\left[\frac{m_0z_0}{Mg_z},\cdots,\frac{m_nz_n}{Mg_z}\right]\right|\right|_2 \bigg(\left|\left|\left[\ddot{x}_0^{\text{dynamic}},\cdots,\ddot{x}_n^{\text{dynamic}} \right]\right|\right|_2\\
&~~~~+\left|\left|\left[\ddot{y}_0^{\text{dynamic}},\cdots,\ddot{y}_n^{\text{dynamic}} \right]\right|\right|_2\bigg)\\
&\leq 2\left|\left|\left[\frac{m_0z_0}{Mg_z},\cdots,\frac{m_nz_n}{Mg_z}\right]\right|\right|_2 \sum_i(|x_i|\dot{\psi}^2+|x_i||u_{\psi}|+\\
&|y_i|\dot{\psi}^2+|y_i||u_{\psi}|).
\end{aligned}
\label{turn_bound}
\end{equation}

From inequality (\ref{forward_bound}) and (\ref{turn_bound}), it can be inferred that there exist trajectories of $u_a$ and $u_{\phi}$ such that, $\forall~t\in[t_0,t_f]$, $v>0$, $\dot{\psi}>0$, and $\left|\left|\boldsymbol{p}_{zmp}^{\text{dynamic}}\!\!-\!\!\boldsymbol{p}_{zmp}^{\text{static}}\right|\right|_2 \leq \sigma$ for arbitrarily small $\sigma>0$.  Hence, there always exists a trajectory of $\boldsymbol{u}$ such that $\dot \tau>0$ over the unit interval $I$.

\end{proof}
\end{appendix}

\section*{Acknowledgement}
This work was supported by the National Sciences and
Engineering Research Council (NSERC) Canadian
Robotics Network (NCRN), the McGill Engineering Doctoral
Awards and Summer Undergraduate Research in Engineering
(SURE) programs. The authors would also like to thank Centre de Formation Professionnelle Mont-Laurier for their help during field work.

\end{CJK}

\begin{thebibliography}{10}

\bibitem{huang2000}
Q.~Huang, K.~Tanie, and S.~Sugano.
\newblock Coordinated motion planning for a mobile manipulator considering
  stability and manipulation.
\newblock {\em The International Journal of Robotics Research}, 19(8):732--742,
  2000.

\bibitem{howard2007}
T.~M. Howard and A.~Kelly.
\newblock Optimal rough terrain trajectory generation for wheeled mobile
  robots.
\newblock {\em The International Journal of Robotics Research}, 26(2):141--166,
  2007.

\bibitem{papa1996}
E.~G. Papadopoulos and D.~A. Rey.
\newblock A new measure of tipover stability margin for mobile manipulators.
\newblock In {\em Proceedings of IEEE International Conference on Robotics and
  Automation}, pages 3111--3116, Minneapolis, MN, USA, April 1996. IEEE.

\bibitem{lindroos2017}
O.~Lindroos, P.~La Hera, and C.~Haggstrom.
\newblock Drivers of advances in mechanized timber harvesting - a selective
  review of technological innovation.
\newblock {\em Croation Journal for Engineering}, 38(2):243--258, 2017.

\bibitem{chen2018}
K.~Chen, M.~Kamezaki, T.~Katano, T.~Kaneko, K.~Azuma, T.~Ishida, M.~Seki,
  K.~Ichiryu, and S.~Sugano.
\newblock Compound locomotion control system combining crawling and walking for
  multi-crawler multi-arm robot to adapt unstructured and unknown terrain.
\newblock {\em Robomech Journal}, 5(2), 2018.

\bibitem{pai1998}
D.~K. Pai and L.-M. Reissell.
\newblock Multiresolution rough terrain motion planning.
\newblock {\em IEEE Transactions on Robotics and Automation}, 14(1):19--33,
  1998.

\bibitem{ge2002}
S.S. Ge and Y.J. Cui.
\newblock Dynamic motion planning for mobile robots using potential field
  method.
\newblock {\em Autonomous Robots}, 13:207--222, 2002.

\bibitem{gibson1971}
H.~G. Gibson, K.~C. Elliott, and S.~P.~E. Persson.
\newblock Side slope stability of articulated-frame logging tractors.
\newblock {\em Journal of Terramechanics}, 8(2):65--79, 1971.

\bibitem{diaz2005}
A.~Diaz-Calderon and A.~Kelly.
\newblock On-line stability margin and attitude estimation for dynamic
  articulating mobile robots.
\newblock {\em The International Journal of Robotics Research},
  24(10):845--866, 2005.

\bibitem{mosa2011}
M.~Mosadeghzad, D.~Naderi, and S.~Ganjefar.
\newblock Dynamic modeling and stability optimization of a redundant mobile
  robot using a genetic algorithm.
\newblock {\em Robotica}, 30:505--514, 2012.

\bibitem{bouton2007}
N.~Bouton, R.~Lenain, B.~Thuilot, and J.~Fauroux.
\newblock A rollover indicator based on the prediction of the load transfer in
  presence of sliding: application to an all terrain vehicle.
\newblock In {\em Proceedings of the 2007 IEEE International Conference on
  Robotics and Automation}, pages 1158--1163, Roma, Italy, April 2007. IEEE.

\bibitem{bouton2010}
N.~Bouton, R.~Lenain, B.~Thuilot, and P.~Martinet.
\newblock A new device dedicated to autonomous mobile robot dynamic stability:
  Application to an off-road mobile robot.
\newblock In {\em 2010 IEEE International Conference on Robotics and
  Automation}, pages 3813--3818, Anchorage, AK, USA, May 2010. IEEE.

\bibitem{denis2016}
D.~Denis, B.~Thuilot, and R.~Lenain.
\newblock Online adaptive observer for rollover avoidance of reconfigurable
  agricultural vehicles.
\newblock {\em Computers and Electronics in Agriculture}, 126:32--43, 2016.

\bibitem{vuko1972}
M.~Vukobratovic and J.~Stepanenko.
\newblock On the stability of anthropomorphic sytems.
\newblock {\em Mathematical Biosciences}, 15:1--37, 1972.

\bibitem{shin1985}
K.~G. Shin and N.~D. McKay.
\newblock Minimum-time control of robotic manipulators with geometric path
  constraints.
\newblock {\em IEEE Transactions on Automatic Control}, 30(6):531--541, 1985.

\bibitem{bobrow1985}
J.~E. Bobrow, S.~Dubowsky, and J.~S. Gibson.
\newblock Time-optimal control of robotic manipulators along specified paths.
\newblock {\em The International Journal of Robotics Research}, 4(3):3--17,
  1985.

\bibitem{seraji1998}
H.~Seraji.
\newblock A unified approach to motion control of mobile manipulators.
\newblock {\em International Journal of Robotics Research}, 17(2):107--118,
  1998.

\bibitem{versch2016}
R.~Verschueren, N.~van Duijkeren, J.~Swevers, and M.~Diehl.
\newblock Time-optimal motion planning for n-dof robot manipulators using a
  path-parametric system reformulation.
\newblock In {\em 2016 American Control Conference}, Boston, MA, USA, July
  2016. IEEE.

\bibitem{padois2006}
V.~Padois, J.-Y. Fourquet, and P.~Chiron.
\newblock From robotic arms to mobile manipulation: on coordinated motion
  schemes.
\newblock In {\em 2nd International Innovative Production Machines and Systems
  Conference}, European Union, July 2006.

\bibitem{chen2015}
D.~Chen, Z.~Liu, and G.~V. Wichert.
\newblock Uncertainty-aware arm-base coordinated grasping strategies for mobile
  manipulation.
\newblock {\em Journal of Intelligent Robotics Systems}, 80(1):205--223, 2015.

\bibitem{avanzini2015}
G.~B. Avanzini, A.~M. Zanchettin, and P.~Roco.
\newblock Constraint-based model predictive control for holonomic mobile
  manipulators.
\newblock In {\em 2015 IEEE/RSJ International Conference on Intelligent Robots
  and Systems}, pages 1473--1479, Hamburg, Germany, September 2015. IEEE.

\bibitem{sugano1993}
S.~Sugano, Q.~Huang, and I.~Kato.
\newblock Stability criteria in controlling mobile robotic systems.
\newblock In {\em Proceedings of the 1993 IEEE/RSJ International Conference on
  Intelligent Robots and Systems}, pages 832--838, Yokohama, Japan, July 1993.
  IEEE.

\bibitem{dine2018}
K.~M.~E. Dine, J.~Corrales-Ramon, Y.~Mezouar, and J.~Fauroux.
\newblock A unified mobile manipulator control for online tip-over avoidance
  based on zmp disturbance observer.
\newblock In {\em 2018 IEEE International Conference on Robotics and
  Biomimetics}, pages 1437--1443, Kuala Lumpur, Malaysia, Decemner 2018. IEEE.

\bibitem{kim2002}
J.~Kim, W.~K.~Chung adn Y.~Youm, and B.~H. Lee.
\newblock Real-time zmp compensation method using null motion for mobile
  manipulators.
\newblock In {\em 2002 IEEE International Conference on Robotics and
  Automation}, pages 1967--1972, Wangshington, DC, USA, May 2002. IEEE.

\bibitem{choi2012}
D.~Choi and J.~Oh.
\newblock Zmp stabilization of rapid mobile manipulator.
\newblock In {\em 2012 IEEE International Conference on Robotics and
  Automation}, pages 883--888, Saint Paul, MN, USA, May 2012. IEEE.

\bibitem{lee2012}
S.~Lee, M.~Leibold, M.~Buss, and F.~C. Park.
\newblock Rollover prevention of mobile manipulators using invariance control
  and recursive analytic zmp gradients.
\newblock {\em Advanced Robotics}, 26(11):1317--1341, 2012.

\bibitem{mohammadi2016}
A.~Mohammadi, I.~Mareels, and D.~Oetomo.
\newblock Model predictive motion control of autonomous forklift vehicles with
  dynamics balance constraint.
\newblock In {\em 2016 14th International Conference on Control, Automation,
  Robotics, and Vision}, pages 1--6, Phuket, Thailand, Nov. 2016. IEEE.

\bibitem{papa1997a}
E.~Papadopoulos and S.~Sarkar.
\newblock The dynamics of an articulated forestry machine and its applications.
\newblock In {\em Proceedings of the 1997 IEEE International Conference on
  Robotics and Automation}, pages 323--328, Albuquerque, New Mexico, April
  1997. IEEE.

\bibitem{wester2008}
S.~Westerberg, I.~R. Manchester, U.~Mettin, P.~L. Hera, and A.~Shiriaev.
\newblock Virtual environment teleoperation of a hydraulic forestry crane.
\newblock In {\em Proceedings of the 2008 IEEE International Conference on
  Robotics and Automation}, pages 4049--4054, Pasadena, CA, USA, May 2008.
  IEEE.

\bibitem{hera2009}
P.~L. Hera, U.~Mettin, S.~Westerberg, and A.~S. Shiriaev.
\newblock Modeling and control of hydraulic rotary actuators used in forestry
  cranes.
\newblock In {\em 2009 IEEE International Conference on Robotics and
  Automation}, pages 1315--1320, Kobe, Japan, May 2009. IEEE.

\bibitem{morales2014}
D.~O. Morales, S.~Westerberg, P.~X.~La Hera, L.~Freidovich, and A.~S. Shiriaev.
\newblock Increasing the level of automation in the forestry logging process
  with crane trajectory planning and control.
\newblock {\em Journal of Field Robotics}, 31(3):343--363, 2014.

\bibitem{morales2015}
D.~O. Morales, P.~L. Hera, S.~Westerberg, L.~B. Freidovich, and A.~S. Shiriaev.
\newblock Path-constrained motion analysis: An algorithm to understand human
  performance on hydraulic manipulators.
\newblock {\em IEEE Transactions on Human-Machine Systems}, 45(2):187--199,
  2015.

\bibitem{self2020}
J.~{Song} and I.~{Sharf}.
\newblock Time optimal motion planning with zmp stability constraint for timber
  manipulation.
\newblock In {\em 2020 IEEE International Conference on Robotics and Automation
  (ICRA)}, pages 4934--4940, Paris, France, May 2020. IEEE.

\bibitem{patt2014}
M.~A. Patterson and A.~V. Rao.
\newblock Gpops-ii: A matlab software for solving multiple-phase optimal
  control problems using hp-adaptive gaussian quadrature collocation methods
  and sparse nonlinear programming.
\newblock {\em ACM Transactions on Mathematical Software}, 41(1):1--37, 2014.

\bibitem{reiter2018}
A.~{Reiter}, A.~{M¸ller}, and H.~{Gattringer}.
\newblock On higher order inverse kinematics methods in time-optimal trajectory
  planning for kinematically redundant manipulators.
\newblock {\em IEEE Transactions on Industrial Informatics}, 14(4):1681--1690,
  2018.

\bibitem{lavalle2001}
S.~M. LaValle and J.~J.~Kuffner Jr.
\newblock Randomized kinodynamic planning.
\newblock {\em The International Journal of Robotics Research}, 20(5):378--400,
  2001.

\end{thebibliography}
\end{document}